\newtheorem{theorem}{Theorem} 
\theoremstyle{definition}
\newtheorem{definition}{Definition}
\theoremstyle{definition}
\title{\textbf{Quasi-Framelets: Robust Graph Neural Networks via Adaptive Framelet Convolution}}
\author{Mengxi Yang\footnote{The University of Sydney}  \footnote{\texttt{mengxi.yang, dai.shi, xuebin.zheng, jie.yin, junbin.gao@sydney.edu.au}}\and Dai Shi\footnotemark[1]\and Xuebin Zheng\footnotemark[1] \and  Jie Yin\footnotemark[1] \and Junbin Gao\footnotemark[1]}
\date{}
\begin{document}

\maketitle
\begin{abstract}
This paper aims to provide a novel design of a multiscale framelet convolution for spectral graph neural networks (GNNs). While current spectral methods excel in various graph learning tasks, they often lack the flexibility to adapt to noisy, incomplete, or perturbed graph signals, making them fragile in such conditions. Our newly proposed framelet convolution addresses these limitations by decomposing graph data into low-pass and high-pass spectra through a finely-tuned multiscale approach. Our approach directly designs filtering functions within the spectral domain, allowing for precise control over the spectral components. The proposed design excels in filtering out unwanted spectral information and significantly reduces the adverse effects of noisy graph signals. Our approach not only enhances the robustness of GNNs but also preserves crucial graph features and structures. Through extensive experiments on diverse, real-world graph datasets, we demonstrate that our framelet convolution achieves superior performance in node classification tasks. It exhibits remarkable resilience to noisy data and adversarial attacks, highlighting its potential as a robust solution for real-world graph applications. This advancement opens new avenues for more adaptive and reliable spectral GNN architectures.
\end{abstract}

\section{Introduction}
Graph neural networks (GNNs), as a deep representation learning method for graph data, have aroused considerable research interest in the recent years. 
\citet{ DuvenaudMaclaurinIparraguirreBombarellHirzelAspuruGuzikAdams2015} and \citet{KipfWelling2017} designed the Graph Convolution Networks (GCNs) with a significantly improved performance on semi-supervised node classification. Since then, a large number of GNN variants have been proposed, such as GAT  \citep{velivckovic2017graph}, GraphSAGE \citep{HamiltonYingLeskovec2017}, GIN \citep{XuHuLeskovecJegelka2019}, and DGI \citep{VelickovicFedusWilliamHamiltonLioBengioDevonHjelm2019}. As a further development, more types of graph learning tasks have been explored. For example, \citet{ZouPengHuangYangLiWuLiuYu2023} considered structural
data learning based on GNNs while \citet{SunYePengWangYu2023} explored continual graph learning.

Recent works \citep{fu2020understanding, ma2021unified} provided a unified perspective about these state-of-the-art GNNs to set up a framework based on graph denoising view. This perspective demonstrates a promising potential for improving representation learning on graphs through efficiently handling adaptive smoothness across graphs. 
Graph convolution typically serves as a feature aggregation operator to exploit information in node neighborhoods to facilitate downstream tasks like node classification. For most GNNs, the main difference between them is in their various aggregation operators. In other words, from the graph signal denoising view, different GNNs differ in the way signals are regulated through convolution. Although various graph convolutions have been proposed, the majority of them are limited to work as a low-pass filter \citep{fu2020understanding}, which means that high-frequency graph information is abandoned. In signal processing, high-frequency signals tend to be treated as noises or uncommon information and thus, be removed to achieve global smoothness. On graphs, many state-of-the-art GNN models follow the same idea and adopt $l_2$-based graph smoothing when recasting them as solving a graph signal denoising problem \citep{liu2021elastic}. However, the uncommon but true graph signals are often ignored by these methods. In order to enhance the local smoothness adaptivity of GNNs, Elastic GNNs \citep{liu2021elastic} introduce a novel regularization term, considering both $l_2$ and $l_1$ norm.

In recent years, similar efforts have been made on spectral GNNs, such as ChebyNet \citep{defferrard2016convolutional},  \citep{KipfWelling2017}, and GWNN \citep{xu2019graph}. As a pioneering work of spectral approaches, spectral CNN leverages graph data with the graph Fourier transform to perform graph convolution. Subsequent works like \citep{KipfWelling2017} proposed a spectral method where a graph convolutional network is designed via a localized first-order approximation of spectral graph convolutions. To take into account more localized properties, graph wavelet transform was introduced in \citep{xu2019graph}. Nevertheless, 
all previous graph convolutions operate as low-pass filters on graphs, but ignore the necessity of high-pass filters. To achieve multi-resolution graph signal representation and analysis,  
\citet{dong2017sparse} developed tight graph framelets by combining tight wavelet frames and spectral graph theory, yielding an efficient and accurate representation of graphs. \citet{zheng2021framelets} further proposed UFG convolution that regulates low-pass and high-pass information from graph signals simultaneously 
to achieve better performance. Nonetheless, 
the framelet construction of all existing methods have to rely on the multiple finite filter banks in the spatial domain, thus lacking adaptivity in cutting off undesired local, high-frequency graph signals.

To fill this research gap, in this paper, we propose a novel framelet convolution called \textbf{Quasi-Framelets} for robust graph representation learning. The core idea is to directly construct filtering functions in the spectral domain so as to accurately model local signals on noisy graphs. This leaves room for designing any specific framelets as appropriate for meaningful frequency suppression. Specifically, Quasi-Framelets leverage framelet transforms to decompose graph data into low-pass and high-pass spectra in a fine-tuned multiscale way, which enables better preservation of node features and graph geometric information. 
Through directly constructing filtering functions in the spectral domain, our proposed framelet convolution offers greater flexibility in achieving local adaptive smoothness and cutting-off unwanted spectral information. This allows Quasi-Framelets to effectively alleviate the adverse effect of noisy graph signals and demonstrate desired robustness to noisy node attributes and disturbed graph topology,
In summary, our contributions are three-fold:
\begin{enumerate}

\item We propose a new method of designing multiscale framelets from spectral perspectives. Our new method offers learning capacity, adaptivity, and flexibility in filtering out unwanted spectral information on noisy graphs.

\item We theoretically prove that the proposed framelet convolution has all the theoretical properties of classic framelets such as the fast algorithm in decomposition and reconstruction. 

\item The results of extensive experiments validate the superior performance of our proposed framelet convolution on noisy graphs and under adversarial attacks for node classification tasks.
\end{enumerate} 

The rest of the paper is organized as follows. In Section \ref{Sec:2},
we review relevant recent work on graph neural networks and introduce related concepts and notation. In Section \ref{Sec:3}, we propose the new framework of construction of the so-called quasi-Framelets for graph signals, particularly the signal decomposition and reconstruction theory and algorithms are derived. Section \ref{Sec:4} is dedicated to conducting comprehensive experiments on six benchmark graph datasets and presenting the experimental results and analysis. Finally, this paper is concluded in Section \ref{Sec:5}.

\section{Preliminaries and Related Work}\label{Sec:2}
This section serves as a quick summary of the classic spectral GNNs and positions our work in the related literature. For this purpose, we first introduce the following commonly used notation.
Given an undirected graph $\mathcal{G} =(\mathcal{V}, \mathcal{E})$, $\mathcal{V} = \{v_n\}^N_{n=1}$ represents the set of $N$ nodes, and $\mathcal{E} = \{e_{ij} = (v_i, v_j)\}$ is the set of edges with cardinality $E=|\mathcal{E}|$. For graph $\mathcal{G}$, the adjacency matrix is denoted by $\mathbf A \in\mathbb{R}^{N\times N}$ such that $\mathbf A_{ij} = 1$ if $e_{i,j} = (v_i, v_j)$ is an edge of the graph {and $0$ otherwise.} Besides, the graph Laplacian matrix is defined as $\mathbf L = \mathbf{D} - \mathbf{A}$, where $\mathbf{D}$ is a diagonal degree matrix. The normalized Laplacian matrix is $\widehat {\mathbf L} = \mathbf I_N - \mathbf{D}^{-1/2}\mathbf A\mathbf{D}^{-1/2} =
\mathbf U\boldsymbol{\Lambda}\mathbf{U}^T$, with a diagonal matrix of all the eigenvalues (spectra) $\boldsymbol{\Lambda} = \{\lambda_1, ..., \lambda_N\}$ of $\widehat {\mathbf L}$ as well as $\mathbf{U} = \{u_1, ..., u_N\}$ as the matrix of eigenvectors. We note that in this paper, for convenience reasons, we only consider undirected, unweighted graphs. 

\subsection{Classic Graph Neural Networks (GNNs)}
It is well-known that Graph Neural Networks (GNNs) layers can be categorized into two major groups: spatial-based GNNs and spectral-based GNNs \citep{WuPanChenLongZhangYu2021}. 

The spatial-based methods define graph
convolutions based on a node's spatial relations. The implementation of most spatial-based methods relies on the so-called message-passing schemes. Spatial-based GNNs incorporate node connectivities as spatial information, and message passing is a core mechanism within GNNs that facilitates the aggregation of information between nodes, allowing for the learning of graph-structured data. These techniques are widely applied in various domains, including social network analysis, recommendation systems, and bioinformatics.  The typical examples include the GCN \citep{KipfWelling2017},  GraphSAGE \citep{HamiltonYingLeskovec2017}, GIN \citep{XuHuLeskovecJegelka2019} and Spatial Graph Attention Networks (SpGAT) \citep{yu2018spatio} to name a few.

On the other hand, spectral-based GNNs build on the theory of Fourier analysis for graph signals. The convolution operations of the GNNs in this type of method are carried out in the spectral domain for graph signals. With graph Fourier transform and convolution theorem, the classic vanilla spectral GNN layer \citep{defferrard2016convolutional} converts graph signals from the spatial domain to the spectral domain and conducts convolution operation. {Specifically, for the graph feature matrix  $\mathbf X \in \mathbb R^{N\times d}$,} the vanilla spectral GNN defines the graph convolution operator as
\begin{align}
    g_{\theta} \star \mathbf X = \mathbf{U} g_{\theta}(\boldsymbol{\Lambda}) \mathbf{U}^T \mathbf X \label{eq:1}{,}
\end{align}
where $g_{\theta}(\boldsymbol{\Lambda}) =\text{diag}(g_{\theta}(\lambda_1), \ldots, g_{\theta}(\lambda_N))$ with the full set of spectra related to the spectral bases,  and $\star$ is the standard convolution operator. Here, $g_{\theta}$ is a designated spectral filtering function, usually seen as a discretized diagonal matrix that can be learnable. Furthermore, it is known that for any graph signal $\mathbf x \in \mathbb R^N$, $\mathbf U^T\mathbf x$ is called the graph Fourier transform of $\mathbf x$. 

Various spectral GNNs differ in the way how graph signals are regulated through spectral filters in the spectral domain. To capture localized property of graph signals and relieve the high computational burden of graph Laplacian eigendecomposition in \citep{BrunaZarembaSzlamLeCun2014}, localized polynomial approximation \citep{KipfWelling2017} and the concept of graph wavelets \citep{xu2019graph, defferrard2016convolutional} have been introduced to define convolution operators. The spectral filters proposed by these methods serve as low-pass filters, which means that all high-frequency signals, including true uncommon signals, are filtered. To further model the locality property of graph spectra, graph framelets \citep{dong2017sparse, zheng2021framelets,han2022generalized} provide multiresolution analysis by considering both low-pass and high-pass signals to construct filtering functions. 

The idea of graph framelets is initially proposed in \citep{zheng2021framelets}, where multiresolution analysis enables the extraction of both local and global information of graph data. However, one main drawback of this forward framelet strategy is that it is unclear what high frequency can be removed and what low frequency can be retained. Furthermore, all existing spectral methods rely on the multiple finite filter banks in the spatial domain to construct wavelets. Hence, they are unable to cut off undesired local, high-frequency graph signals.

\subsection{Robustness on Graph Learning}

With GNN models applied in various domains, a growing number of research works have concerned the robustness of GNNs given noisy graphs. As shown in  \citep{zugner2019certifiable}, node classification with GNNs can be easily attacked by perturbations on node features and graph structures. Such vulnerabilities towards attacks are caused by the message-passing mechanism of most GNNs. To improve the robustness of graph-based methods, many works have been proposed from a wide range of perspectives such as adversarial training \citep{feng2019graph}, transfer training \citep{tang2020transferring}, and smoothing distillation \citep{chen2019adversarial}. Meanwhile, some algorithms, such as spectral clustering \citep{bojchevski2017robust}, focus on enhancing robustness with respect to noise. Basically, these methods follow the idea which is to smoothing the prediction around inputs over graph structures, with the objective of considering both local smoothness and global graph structures. 

For GNNs, recent studies \citep{ma2021unified, fu2020understanding} first note that the layer of many GNNs, while intrinsically performing denoising and smoothing on node features and graph structures, can be indeed explained as the solution of the specifically designed optimization problem. For example, the layer of the classic GCN \citep{KipfWelling2017} is the solution to the following graph denoising problem
\begin{align}
\min_{\mathbf{X}} \quad
& \mathcal{L} = \|\mathbf X - \mathbf F\| ^2_F + c \cdot \text{tr}(\mathbf X^T\mathbf L \mathbf X)
\label{eq:3}{,}
\end{align}
where {we denote $\mathcal L$ as the quantity of the lost function, and }the first term measures the similarity between observed values $\mathbf F$ and the ground-truth $\mathbf X$ {(or its embedded version i.e., $\mathbf {XW}$)} and the second term shows a Laplacian regularization {(also known as the Dirichlet energy)} for true value $\mathbf{X}$ smoothing across graph $\mathcal{G}$.

While most representative GNNs favor the $l_2$-based (i.e., the quadratic form of the node features) smoothing on graph signals as defined in \eqref{eq:3}, other studies are concerned with the robustness of the GNN models towards severe outlier noise corruption. To enhance robustness, Elastic GNNs \citep{liu2021elastic} introduce a novel graph signal estimator that combines the strengths of both $l_1$ and $l_2$-based graph smoothing in the design of GNNs. The work of Elastic GNNs serves a similar role to wavelet denoising methods on graphs \citep{dong2017sparse} in preserving the smoothness adaptivity. 

Rooted in deep learning, adversarial training has been successful
in learning robust classifiers. Graph Adversarial Training (GraphAT) \citep{feng2019graph} was proposed to improve the robustness and generalization of models learned on graphs. \citet{tang2020transferring} took the goal against poisoning attacks by exploring clean graphs. Similarly, based on generative adversarial networks, \citet{LiFuZhuPengWangSunYuHe2023} introduced a robust and generalized framework to learn robust and generalized representations for graph nodes. In \citep{ZhangZitnik2020}, a general algorithm, named GNNGUARD, has been proposed to defend against a variety of training-time attacks, which can be integrated into any GNN structures. By representing each node as a Gaussian distribution, a robust GNN can be trained as done in \citep{ZhuZhangCuiZhu2019}.  

It is worthwhile noting that almost all these models in enhancing robustness fall into the regime of spatial-based GNNs. In contrast, our proposed Quasi-framelets convolution takes a distinct learnable approach; we aim to directly learn filtering functions in the spectral domain to better achieve both local and global adaptive smoothness and to improve GNN's robustness. Note that the Elastic GNN version in the quasi-framelet domain has been recently explored in \citep{ShaoShiHanVasnevGuoGao2022,shirevisiting}.

\section{The Model based on Quasi-Framelets}\label{Sec:3}
In this section, we propose the Quasi-Framelets (QUFG) framework on graphs, which includes a new set of QUFG transform functions and a fast algorithm for the transformation.

\subsection{The Quasi-Framelet Transform}
Our idea follows the basic Framelet theory, however, we work backward. Rather than looking for scaling functions from the finite filter banks in the spatial domain, we aim to \textbf{directly construct the spectral modulation (or filtering) functions in the spectral domain}. The starting point is from the necessary conditions for signal decomposition (Fourier transform) and reconstruction (Inverse Fourier transform).

\begin{definition}[Filtering functions for Quasi-Framelets] We call a set of $K+1$ positive filtering functions, $\mathcal{F} = \{g_0(\xi), g_1(\xi), ..., g_K(\xi)\}$, Quasi-Framelet scaling functions if the following identity condition is satisfied:
\begin{align}
g_0(\xi)^2 + g_1(\xi)^2 + \cdots + g_K(\xi)^2 \equiv 1,\;\;\; \forall \xi \in [0, \pi] \label{eq:4}
\end{align}
such that $g_0$ descends from 1 to 0 and $g_K$ ascends from 0 to 1 as frequency increases over the spectral domain $[0, \pi]$. 
\end{definition}
Particularly $g_0$ aims to regulate the highest frequency while $g_K$ to regulate the lowest frequency, and the rest to regulate other frequencies in between.  

Under Definition 1, designing new filtering functions becomes more flexible in that we only need to find appropriate filtering functions, each of which has the capability of modulating specific/desired frequency components. This is significantly different from the principle in the classic framelet design, where one looks for valid framelets from appropriate scaling functions from finite filter banks, but without clear guidance on how to choose for specific frequency modulation purposes.

As examples, we propose two sets of such filtering functions.  
\begin{enumerate}
\item Sigmoid Filtering Functions ($K=1$):
\begin{align*}
        g_0(\xi)=\sqrt{1\!-\!\frac{1}{1\!+\!\exp\{-\alpha (\xi/\pi - 0.5)\}}};\;\;
        g_1(\xi)=\sqrt{\frac{1}{1\!+\!\exp\{-\alpha (\xi/\pi - 0.5)\}}}{,}
\end{align*}
where $\alpha>0$. In general, we suggest taking $\alpha \geq 20$ to ensure sufficient modulation power at both the lowest and highest frequencies.

\item Entropy Filtering Functions ($K=2$):
    \begin{align*}
    g_0(\xi) &= \begin{cases} \sqrt{1 - g^2_1(\xi)}, & \xi <= \pi/2\\
    0, & \text{otherwise}
    \end{cases}\;\;\;\;\;\;
        g_1(\xi) = \sqrt{4\alpha \xi/\pi - 4 \alpha \left(\xi/\pi\right)^2}\\
        g_2(\xi) &= \begin{cases} \sqrt{1 - g^2_1(\xi)}, & \xi > \pi/2\\
    0, & \text{otherwise}
    \end{cases}
    \end{align*}
    Where $0<\alpha\leq 1$ could be a learnable parameter. In our experiment, we empirically set $\alpha = 0.75$. When $\alpha=1$, $g^2_1(\pi\xi)$ is the so-called \textit{binary entropy function}.
\end{enumerate}

In the second example of Entropy Filtering Functions, we can clearly see that $g_0(\xi)$ could completely cut off higher frequency components while $g_2(\xi)$ could completely remove lower frequency components. The filtering in the final GCN could make such choices according to graph data in the learning process (recall that learnable coefficients will be assigned in the practical implementation). Both examples are shown in Fig.~\ref{fig1:quasi-framelet transformation matrices} with two choices of different $\alpha$ values. One can check that given a fixed range of (normalized) eigenvalues, $\alpha$ efficiently modulates the filtering results. Later on in Section \ref{sensitivity_on_alpha}, we empirically show that $\alpha$ owns the potential of mitigating the so-called heterophily adaption problem in graph neural networks.
\begin{figure*}[t]
\centering
\subfloat[Sigmoid with $\alpha = 10$]{
         \includegraphics[width=0.45\textwidth]{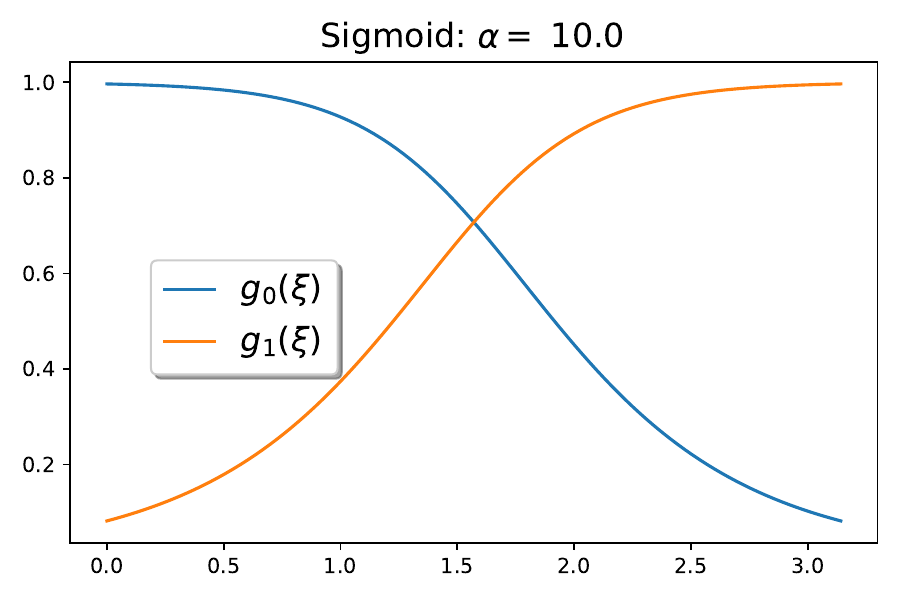}}\quad\quad
\subfloat[Sigmoid with $\alpha = 50$]{
         \includegraphics[width=0.45\textwidth]{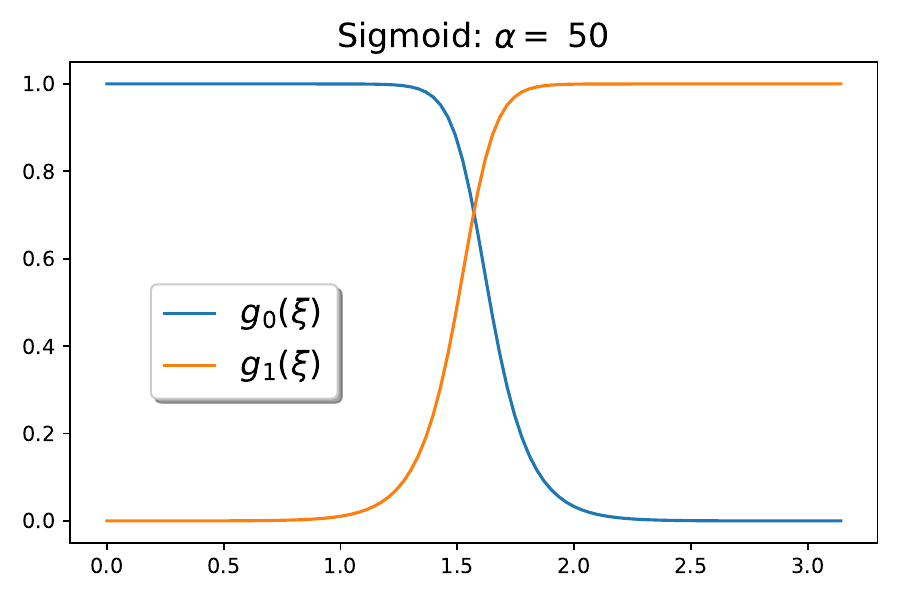}}\\~\\~\\
\subfloat[Entropy with $\alpha = 0.5$]{
         \includegraphics[width=0.45\textwidth]{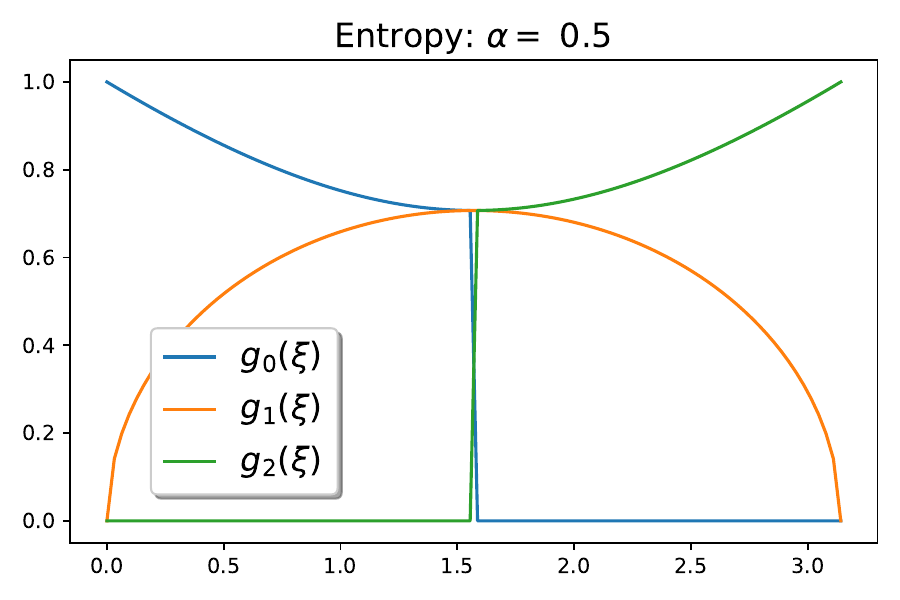}} \quad\quad
\subfloat[Entropy  with $\alpha = 0.1$]{
         \includegraphics[width=0.45\textwidth]{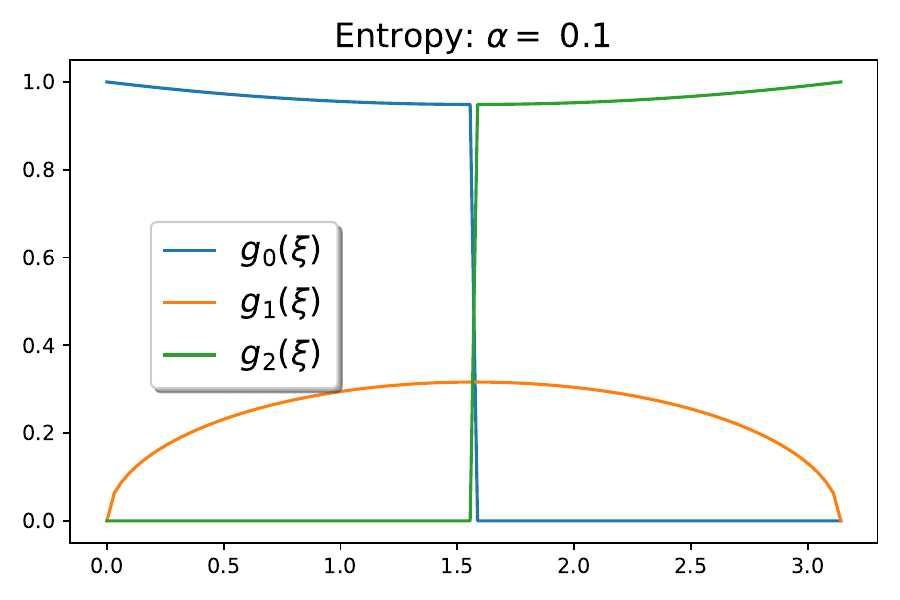}}       
\caption{ (a) and (b): examples of Sigmoid filtering functions with parameter $\alpha = 10$ and with $\alpha = 50$, respectively; 
(c) and (d): examples of Entropy filtering functions with parameter $\alpha  = 0.5$ and with $\alpha = 0.1$, respectively.}
\label{fig1:quasi-framelet transformation matrices}
\end{figure*}

Now consider a graph  $\mathcal{G} =(\mathcal{V}, \mathcal{E})$ and any graph signal $\mathbf x$ defined on its nodes. Suppose that $\mathbf U$ is the orthogonal spectral bases given by the normalized graph Laplacian $\widehat {\mathbf L}$ with its spectra $0\leq \lambda_1\leq \lambda_2 \leq \cdots\leq \lambda_N$. To build an appropriate spectral transformation defined as \eqref{eq:1}, for a given set of Quasi-Framelet functions $\mathcal{F} = \{g_0(\xi), g_1(\xi), ..., g_K(\xi)\}$ defined on $[0, \pi]$ and a given level $L$ ($\geq 0$), define the following Quasi-Framelet signal decomposition 
\begin{align}
\widehat{\mathbf x} = \mathcal{W}\mathbf x. \label{eq:9}
\end{align}
where the matrix operator $\mathcal{W} = [\mathcal{W}_{0,L}; \mathcal{W}_{1,0}; ...; \mathcal{W}_{K,0}; $ $\mathcal{W}_{1,1}; ..., \mathcal{W}_{K,L}]$ is stacked vertically in MATLAB notation from the following  signal transformation matrices
\begin{align}
    \mathcal{W}_{0,L} &= \mathbf U g_0\left(\!\frac{\boldsymbol{\Lambda}}{2^{m+L}}\!\right) \cdots g_0\left(\!\frac{\boldsymbol{\Lambda}}{2^{m}}\!\right) \mathbf U^T, \; 
    \mathcal{W}_{k,0} = \mathbf U g_k\left(\!\frac{\boldsymbol{\Lambda}}{2^{m}}\!\right) \mathbf U^T, \text{for } k = 1, ..., K, \label{eq:8b}\\
    \mathcal{W}_{k,\ell} &= \mathbf U g_k\left(\!\frac{\boldsymbol{\Lambda}}{2^{m+l}}\!\right))g_0\left(\!\frac{\boldsymbol{\Lambda}}{2^{m+\ell-1}}\!\right)) \cdots g_0\left(\!\frac{\boldsymbol{\Lambda}}{2^{m}}\!\right)) \mathbf U^T, 
    \;\;\text{for } k=1, ..., K, \ell = 1, ..., L.\label{eq:8}
\end{align}

Note that in the above definition, $m$ is called the coarsest scale level which is the smallest $m$ satisfying $2^{-m}\lambda_N \leq \pi$. Then, the quality of the reconstruction of the graph signal can be guaranteed based on the following theorem.

\begin{theorem}The Quasi-Framelet decomposition \eqref{eq:9} admits a perfect reconstruction for a given graph signal $\mathbf x \in \mathbb R^N$, that is $\mathbf x = \mathcal{W}^T\widehat{\mathbf x}, $ 
i.e., $\mathcal{W}^T\mathcal{W} = \mathbf I_N$.
\end{theorem}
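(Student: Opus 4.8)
The plan is to exploit the fact that every block of $\mathcal{W}$ is a spectral multiplier of the form $\mathbf U D \mathbf U^T$ with $D$ diagonal, so that $\mathcal{W}^T\mathcal{W}$ reduces to a single scalar identity at each eigenvalue, which is then closed out by a telescoping argument driven by the partition-of-unity condition \eqref{eq:3}. First I would note that each of the matrices \eqref{eq:8a}--\eqref{eq:8} has the form $\mathbf U D_{k,l}\mathbf U^T$, where $D_{k,l}$ is a product of diagonal matrices $g_\bullet(\boldsymbol{\Lambda}/2^{\bullet})$ and hence itself diagonal. Since $\mathbf U$ is orthogonal ($\mathbf U^T\mathbf U=\mathbf U\mathbf U^T=\mathbf I_N$) and $D_{k,l}$ is symmetric, $\mathcal{W}_{k,l}^T\mathcal{W}_{k,l}=\mathbf U D_{k,l}^2\mathbf U^T$. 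Because $\mathcal{W}$ is the vertical stacking of these blocks, $\mathcal{W}^T\mathcal{W}=\sum_{\text{blocks}}\mathcal{W}_{k,l}^T\mathcal{W}_{k,l}=\mathbf U\big(\sum_{\text{blocks}}D_{k,l}^2\big)\mathbf U^T$. It therefore suffices to prove $\sum_{\text{blocks}}D_{k,l}^2=\mathbf I_N$; as all the $D_{k,l}$ are functions of $\boldsymbol{\Lambda}$, this reduces to the scalar statement that for each eigenvalue $\lambda\in\{\lambda_1,\dots,\lambda_N\}$ the sum of the squared diagonal entries contributed by the blocks equals $1$.

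Next I would check that all the arguments appearing in \eqref{eq:8a}--\eqref{eq:8}, namely $\lambda/2^{m+j}$ for $0\le j\le L$, lie in $[0,\pi]$: this follows from the definition of the coarsest scale $m$ as the smallest integer with $2^{-m}\lambda_N\le\pi$, together with $\lambda\le\lambda_N$ and $j\ge0$, so the identity \eqref{eq:3} is applicable at every scale $2^{m+j}$. Writing $G_k^{(j)}:=g_k(\lambda/2^{m+j})$, the target scalar identity reads
\begin{align*}
\Big(\textstyle\prod_{j=0}^{L}G_0^{(j)}\Big)^2+\sum_{k=1}^{K}\big(G_k^{(0)}\big)^2+\sum_{l=1}^{L}\sum_{k=1}^{K}\big(G_k^{(l)}\big)^2\Big(\textstyle\prod_{j=0}^{l-1}G_0^{(j)}\Big)^2=1.
\end{align*}
I would set $P_0:=1$ and $P_l:=\prod_{j=0}^{l-1}(G_0^{(j)})^2$ for $l\ge1$, so that $P_{l+1}=P_l\,(G_0^{(l)})^2$, and apply \eqref{eq:3} at each scale in the form $\sum_{k=1}^{K}(G_k^{(l)})^2=1-(G_0^{(l)})^2$. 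Each band-pass term then becomes $(1-(G_0^{(l)})^2)P_l=P_l-P_{l+1}$, so the double sum telescopes to $P_1-P_{L+1}=(G_0^{(0)})^2-\big(\prod_{j=0}^{L}G_0^{(j)}\big)^2$; adding the low-pass term $P_{L+1}$ and the level-$0$ detail term $1-(G_0^{(0)})^2$ leaves exactly $1$.

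The computation is otherwise routine linear algebra; the only genuine obstacle is the bookkeeping. One must align the three families of blocks — the single low-pass branch $\mathcal{W}_{0,L}$, the level-$0$ detail blocks $\mathcal{W}_{k,0}$, and the intermediate blocks $\mathcal{W}_{k,l}$ — so that the shared prefix products $\prod_{j=0}^{l-1}G_0^{(j)}$ organize the squared terms into a clean telescoping sum. Matching the index ranges of these prefix products to the definitions in \eqref{eq:8b}--\eqref{eq:8}, and checking the boundary cases $l=0$ and $l=L$, is where care is needed; once the telescoping is set up, the identity \eqref{eq:3} closes the argument scale by scale, and the orthogonality of $\mathbf U$ lifts the scalar conclusion back to $\mathcal{W}^T\mathcal{W}=\mathbf I_N$, which is equivalent to the stated perfect reconstruction $\mathbf x=\mathcal{W}^T\widehat{\mathbf x}$.
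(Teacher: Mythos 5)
Your proof is correct and follows essentially the same route as the paper's: both reduce $\mathcal{W}^T\mathcal{W}$ to a sum of squared spectral multipliers using the symmetry of the blocks and the orthogonality of $\mathbf U$, then collapse that sum by applying the partition-of-unity condition \eqref{eq:3} scale by scale in a telescoping fashion. The only cosmetic differences are that you diagonalize first and telescope the scalar identity bottom-up (also verifying that every argument $\lambda/2^{m+j}$ lies in $[0,\pi]$, a point the paper leaves implicit), whereas the paper peels levels top-down directly at the matrix level.
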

\begin{proof}  According to the definition of all signal transformation matrices, all $\mathcal{W}_{i,j}$ are symmetric. Hence we have
\begin{align*}
& \mathcal{W}^T\mathcal{W}  =    
    \mathcal{W}_{0,L}\mathcal{W}_{0,L} +\sum^K_{k=1}\mathcal{W}_{k,L}\mathcal{W}_{k,L} + \sum^{L-1}_{l=1}\sum^K_{k=1}\mathcal{W}_{k,l}\mathcal{W}_{k,l}  \\
 =&\mathbf U g^2_0\left(\!\frac{\boldsymbol{\Lambda}}{2^{L+m}}\!\right) \cdots g^2_0\left(\!\frac{\boldsymbol{\Lambda}}{2^{m}}\!\right) \mathbf U^T   + \sum^K_{k=1}\mathbf U g^2_k\left(\!\frac{\boldsymbol{\Lambda}}{2^{m+L}}\!\right)g^2_0\left(\!\frac{\boldsymbol{\Lambda}}{2^{m+L-1}}\!\right) \cdots g^2_0(\frac{\boldsymbol{\Lambda}}{2^{m}}) \mathbf U^T\\
 &   + \sum^{L-1}_{l=}\sum^K_{k=1}\mathcal{W}_{k,l}\mathcal{W}_{k,l}\\
  =& \mathbf U \left(\!g^2_0\left(\!\frac{\boldsymbol{\Lambda}}{2^{L+m}}\!\right) +\sum^K_{k=1}g^2_k\left(\!\frac{\boldsymbol{\Lambda}}{2^{m+L}}\!\right)\!\right)g^2_0\left(\!\frac{\boldsymbol{\Lambda}}{2^{m+L-1}}\!\right) \cdots g^2_0\left(\!\frac{\boldsymbol{\Lambda}}{2^{m}}\!\right) \mathbf U^T\\
  & +\sum^{L-1}_{l=}\sum^K_{k=1}\mathcal{W}_{k,l}\mathcal{W}_{k,l} 
  \end{align*}
  \begin{align*}
  =& \mathbf U g^2_0\left(\!\frac{\boldsymbol{\Lambda}}{2^{m+L-1}}\!\right) \cdots g^2_0\left(\!\frac{\boldsymbol{\Lambda}}{2^{m}}\!\right) \mathbf U^T  +  \sum^{L-1}_{l=0}\sum^K_{k=1}\mathcal{W}_{k,l}\mathcal{W}_{k,l}\\
  & \vdots \\
  =& \mathbf U g^2_0\left(\!\frac{\boldsymbol{\Lambda}}{2^{m}}\!\right) \mathbf U^T + \sum^K_{k=1}\mathbf U g^2_k\left(\!\frac{\boldsymbol{\Lambda}}{2^{m}}\!\right) \mathbf U^T = \mathbf U\left(g^2_0\left(\!\frac{\boldsymbol{\Lambda}}{2^{m}}\!\right) +\sum^K_{k=1} g^2_k\left(\!\frac{\boldsymbol{\Lambda}}{2^{m}}\!\right)\right)\mathbf U^T\\
  =& \mathbf U\mathbf U^T = \mathbf I_N.
\end{align*}
where we have repeatedly used the condition \eqref{eq:3}.  This completes the proof. 
\end{proof}

\subsection{Graph Quasi-Framelets}
The Quasi-Framelet signal decomposition \eqref{eq:9} gives the signal decomposition coefficients. To better understand how the signal was decomposed on the framelet bases, we can define the following graph Quasi-Framelets which can be regarded as the signals in the spatial space.

Suppose $\{(\lambda_{i}, \mathbf u_{i})\}^N_{i=1}$  are the eigenvalue and eigenvector pairs for the normalized Laplacian $\widehat {\mathbf L}$ of graph $\mathcal{G}$ with $N$ nodes. {Denote by $\beta^k_{0}(\xi) = g_k(\frac{\xi}{2^{m}})$ and $\beta^k_{\ell}(\xi) =  g_k(\frac{\xi}{2^{m}}) g_0(\frac{\xi}{2^{m -1}}) \cdots g_0(\frac{\xi}{2^{m-\ell}})$ for $\ell = 1, 2, ..., L, k=0, 1, ..., K.$}
The Quasi-Framelets at scale level $\ell = 1,...,L$ for graph $\mathcal{G}$ with a given set of filtering functions $\mathcal{F} = \{g_0(\xi), g_1(\xi), ..., g_K(\xi)\}$ are defined, for $k = 1,...,K$, by
\begin{align} 
\begin{aligned}
\phi_{0,p}(q)&=\sum^N_{i=1}\beta^0_{L}\left(\!\frac{\lambda_i}{2^L}\!\right)\mathbf v_i (p) \mathbf v_i(q), \;\;\;  \psi^k_{0, p}(q) =\sum^N_{i=1}\beta^k_{0}\left(\!\frac{\lambda_i}{2^{0}}\!\right)\mathbf v_i (p) \mathbf v_i(q) \\
\psi^k_{\ell, p}(q) &=\sum^N_{i=1}\beta^k_{\ell}\left(\!\frac{\lambda_i}{2^{\ell}}\!\right)\mathbf v_i (p) \mathbf v_i(q), \;\;\;\; \ell = 1, 2, ..., L; \;\; k=1, 2, ..., K.
\end{aligned}
\end{align}
for all nodes $u, p$ and $\phi_{\ell, p}$ or $\psi^k_{\ell, p}$ is the low-pass or high-pass framelet translated at node $p$ at scale $\ell$ and node $p$, see \citep{dong2017sparse}.   

It is clear that the low-pass and high-pass framelet coefficients for a signal $\mathbf x$ on graph $\mathcal{G}$, denoted by $v_{\ell,p}$ and $w^k_{\ell, p}$, are the projections $\langle \phi_{\ell, p}, \mathbf x\rangle$ and $\langle \psi^k_{\ell, p}, \mathbf x\rangle$ of the graph signal onto framelets at scale $\ell$ and node $p$.  

Similar to the standard undecimated framelet system \citep{dong2017sparse}, we can define the Quasi-Framelet system QUFS as follows. For any two integers $L, L_1$ satisfying $L > L_1$, we define a Quasi-Framelet system $\text{QUFS} (\mathcal{F}; \mathcal{G})$ (starting from
a scale $J_1$) as a {non-homogeneous, stationary affine system}:
\begin{align}
    \text{QUFS}^L_{L_1}(\mathcal{F}; \mathcal{G})&:= \big\{\phi_{L_1, p}:p\in\mathcal{V}\big\} \cup \left\{\psi^k_{\ell,p}: p\in\mathcal{V}, \ell = L_1, ..., L\right\}^K_{k=1}{.} \label{Eq:new1}
\end{align}
 
The system $\text{QUFS}^L_{L_1}(\mathcal{F}; \mathcal{G})$ is then called a Quasi-tight frame for graph signal space $L^2(\mathcal{G})$ and the elements in $\text{QUFS}^L_{L_1}(\mathcal{F}; \mathcal{G})$ are called Quasi Tight Framelets on $\mathcal{G}$, \textit{quasi-framelets} for short. All the theories about the Quasi-Framelet can be guaranteed in the following theorem.

\begin{theorem}[Equivalence of Quasi-Framelet Tightness] Let $\mathcal{G} = (\mathcal{V},\mathcal{E})$ be a graph, $\{(\lambda_{i}, \mathbf u_{i})\}^N_{i=1}$ be the eigenvalue and eigenvector pairs for its normalized Laplacian $\mathbf L$, and   $\mathcal{F} = \{g_0(\xi), g_1(\xi), ..., g_K(\xi)\}$ be a set of Quasi-Framelet functions satisfying the identity condition \eqref{eq:4}. For $L>L_1$, $\text{QUFS}^L_{L_1}(\mathcal{F}; \mathcal{G})$ is the Quasi-Framelet system given in \eqref{Eq:new1}. Then the following statements are equivalent:

(i) For each $L_1 = 1,...,L$, the Quasi-Framelet system  $\text{QUFS}^L_{L_1}(\mathcal{F}; \mathcal{G})$  is a tight frame for $L^2(\mathcal{G})$, that is, $\forall \mathbf x \in L^2(\mathcal{G})$,
\begin{align}
\|\mathbf x\|^2 = \sum_{p\in\mathcal{V}}|\langle \phi_{L_1, p}, \mathbf x\rangle |^2 +\sum^L_{\ell=L_1}\sum^K_{k=1}\sum_{p\in\mathcal{V}} |\langle \psi^k_{\ell, p}, \mathbf x\rangle |^2.  \label{Them:1}
\end{align}
(ii) For all $\mathbf x \in L^2(\mathcal{G})$ and for $\ell = 1,...,L-1$, the following identities hold
\begin{align}
& \mathbf x  = \sum_{p\in\mathcal{V}}  \langle \phi_{L, p}, \mathbf x\rangle \phi_{L, p} + \sum^K_{k=1} \sum_{p\in\mathcal{V}} \langle \psi^k_{L, p}, \mathbf x\rangle \psi_{L, p}; \text{ and} \label{Them:2}\\
&\sum_{p\in\mathcal{V}}  \langle \phi_{\ell+1, p}, \mathbf x\rangle \phi_{\ell+1, p}  =  \sum_{p\in\mathcal{V}}  \langle \phi_{\ell, p}, \mathbf x\rangle \phi_{\ell, p}+ \sum^K_{k=1}\sum_{p\in\mathcal{V}}  \langle \psi^k_{\ell, p}, \mathbf x\rangle \psi_{\ell, p}{,} \label{Them:3}
\end{align}
(iii) For all $\mathbf x \in L^2(\mathcal{G})$ and for $\ell = 1,...,L-1$, the following identities hold
\begin{align*}
&\|\mathbf x\|^2 = \sum_{p\in\mathcal{V}}|\langle \phi_{L, p}, \mathbf x\rangle |^2 +\sum^K_{k=1}\sum_{p\in\mathcal{V}} |\langle \psi^k_{L, p}, \mathbf x\rangle |^2; \text{ and}
\\
&\sum_{p\in\mathcal{V}}  |\langle \phi_{\ell+1, p}, \mathbf x\rangle|^2   =  \sum_{p\in\mathcal{V}}  |\langle \phi_{\ell, p}, \mathbf x\rangle|^2+ \sum^K_{k=1}\sum_{p\in\mathcal{V}}  |\langle \psi^k_{\ell, p}, \mathbf x\rangle|^2{.}
\end{align*}
\end{theorem}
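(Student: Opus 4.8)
The plan is to push everything into the graph Fourier (spectral) domain, where all three statements become identities among diagonal spectral multipliers, and then to exploit the same telescoping structure already used in the perfect-reconstruction theorem above. First I would introduce the Fourier coefficients $\widehat{x}_i = \langle \mathbf u_i, \mathbf x\rangle$, so that Parseval gives $\|\mathbf x\|^2 = \sum_{i=1}^N \widehat{x}_i^2$. Writing $\langle \phi_{\ell,p}, \mathbf x\rangle = \sum_i g_0(\lambda_i/2^\ell)\,\mathbf u_i(p)\,\widehat{x}_i$ and similarly for $\psi^k_{\ell,p}$, the orthonormality relation $\sum_{p\in\mathcal V}\mathbf u_i(p)\mathbf u_j(p) = \delta_{ij}$ collapses every sum over nodes $p$ (and the resulting double sum over $i,j$) to a single diagonal sum. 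Concretely, $\sum_p |\langle \phi_{\ell,p},\mathbf x\rangle|^2 = \sum_i g_0(\lambda_i/2^\ell)^2\,\widehat{x}_i^2$ and $\sum_p \langle\phi_{\ell,p},\mathbf x\rangle\,\phi_{\ell,p} = A_\ell \mathbf x$, where $A_\ell := \mathbf U\,\mathrm{diag}(g_0(\lambda_i/2^\ell)^2)\,\mathbf U^T$ is symmetric and diagonal in the eigenbasis; the high-pass pieces give analogous operators $B^k_\ell$ with symbol $g_k(\lambda_i/2^\ell)^2$, up to the dyadic cascade factors inherited from the definition of the framelets exactly as in \eqref{eq:8a}--\eqref{eq:8}.

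With this reduction each statement becomes an operator identity. Statement (i) reads $\mathbf I_N = A_{L_1} + \sum_{\ell=L_1}^L\sum_{k=1}^K B^k_\ell$ for every $L_1$; statement (iii) reads $\mathbf I_N = A_L + \sum_k B^k_L$ together with the refinement $A_{\ell+1} = A_\ell + \sum_k B^k_\ell$; and statement (ii) is the vector form of these same two identities. I would first establish (ii) $\Leftrightarrow$ (iii): the direction (ii) $\Rightarrow$ (iii) is immediate on taking inner products with $\mathbf x$, while (iii) $\Rightarrow$ (ii) uses that every operator here is symmetric, so a quadratic-form equality $\langle S\mathbf x,\mathbf x\rangle = 0$ holding for all $\mathbf x$ forces $S = 0$ by polarization over $\mathbb R$, recovering the operator (hence vector) identity. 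I would then prove (i) $\Leftrightarrow$ (iii) by a pure telescoping/induction argument: the case $L_1 = L$ of (i) is exactly the first identity of (iii); downward induction on $L_1$ uses the refinement identity to pass from the cumulative identity at level $L_1+1$ to that at level $L_1$, and conversely subtracting the cumulative identities at consecutive levels $L_1$ and $L_1+1$ returns the refinement identity.

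The step I expect to be the main obstacle is the passage from the scalar energy identities in (iii) back to the vector reconstruction identities in (ii): this is precisely where symmetry of the synthesis operators is indispensable, and I would lean on the observation (already recorded in the proof of the perfect-reconstruction theorem) that each $\mathcal W_{i,j}$, and hence each $A_\ell$ and $B^k_\ell$, is symmetric. The only other point requiring care is the dyadic bookkeeping: one must check that each refinement step $A_{\ell+1} = A_\ell + \sum_k B^k_\ell$ collapses through the identity condition \eqref{eq:3} applied at the argument $\xi = \lambda_i/2^\ell$, which is exactly the telescoping mechanism that drove the perfect-reconstruction proof. Once the scale indices are aligned with \eqref{eq:8a}--\eqref{eq:8}, the three reformulated statements differ only by elementary rearrangement, and the equivalence follows.
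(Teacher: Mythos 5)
Your proposal is correct and is essentially the same argument the paper relies on: the paper gives no details beyond citing the proof of Theorem 1 in \cite{zheng2021framelets}, and that proof is precisely your reduction to the graph Fourier domain (orthonormality of the $\mathbf u_i$ collapsing all node sums, Parseval), followed by the symmetry/polarization step to upgrade the energy identities of (iii) to the reconstruction identities of (ii), and the downward telescoping induction (with subtraction of consecutive levels for the converse) linking (i) and (iii). Your write-up is, if anything, more explicit than the paper's one-line citation, including the correct observation that the scale indices must be aligned with the cascaded products \eqref{eq:8a}--\eqref{eq:8} so that each refinement step collapses via the identity condition \eqref{eq:3}.
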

\begin{proof}
(i)$\Longleftrightarrow$(ii):  Let $\Phi_j := \text{span}\{\phi_{j,p}: p\in \mathcal{V}\}$ and $\Psi^k_j:=\text{span}\{\psi^k_{j,p}: p\in \mathcal{V}\}$. Define projections $\mathbf P_{\Phi_j}$, $\mathbf P_{\Psi^k_j}$, for $k=1, 2, ..., K$, by
$$
\mathbf P_{\Phi_j}(\mathbf x) := \sum_{p\in\mathcal{V}}\langle \phi_{j,p}, \mathbf x\rangle \phi_{j,p};\;\;\;\;
\mathbf P_{\Psi_j}(\mathbf x) := \sum_{p\in\mathcal{V}}\langle \psi^k_{j,p}, \mathbf x\rangle \psi^k_{j,p}; \;\;\;\; \forall \mathbf x \in L^2(\mathcal G)
$$
As $\text{QUFS}^L_{L_1}(\mathcal{F}; \mathcal{G})$ is a quasi-tight frame for $L^2(\mathcal G)$ for $J_1 = 1, . . . , J$, we obtain by polarization identity
\begin{align}
\mathbf x = \mathbf P_{\Phi_{J_1}}(\mathbf x) + \sum^J_{j=J_1}\sum^K_{k=1}\mathbf P_{\Psi^k_j}(\mathbf x)
= \mathbf P_{\Phi_{J_1+1}}(\mathbf x) + \sum^J_{j=J_1+1}\sum^K_{k=1}\mathbf P_{\Psi^k_j}(\mathbf x){,} \label{Proof:1}
\end{align}
for all $\mathbf x \in L^2(\mathcal G)$ and for all $J_1 = 1, . . . , J$. Thus, for $J_1 = 1, . . . , J-1$,
\begin{align}
   \mathbf P_{\Phi_{J_1+1}}(\mathbf x) = \mathbf P_{\Phi_{J_1}}(\mathbf x) + \sum^K_{k=1}\mathbf P_{\Psi^k_{J_1}}(\mathbf x){,}{,}  \label{Proof:2}
\end{align}
which is \eqref{Them:3}.   

Moreover, when $J_1 = J$, \eqref{Proof:1} gives \eqref{Them:2}. Consequently, (i)$\Longrightarrow$(ii).  On the other hand,  recursively using \eqref{Proof:2} gives
\begin{align}
 \mathbf P_{\Phi_{n+1}}(\mathbf x) = \mathbf P_{\Phi_{J_1}}(\mathbf x) + \sum^n_{j=J_1}\sum^K_{k=1} \mathbf P_{\Psi^k_{j}}(\mathbf x)     \label{Proof:3}
\end{align}
for all $J_1 \leq n \leq J-1$. Taking $n = J-1$ together with \eqref{Them:2}, we deduce \eqref{Proof:1}, which is equivalent to \eqref{Them:1}. Thus, (ii)$\Longrightarrow$(i).

(ii)$\Longleftrightarrow$(iii): The equivalence between (ii) and (iii) simply follows from the polarization identity. This completes the proof.  
\end{proof}

Finally, although the tightness property is one of the key factors for successfully incorporating the framelet system into the graph signals, recent studies \citep{han2023continuous,shao2023unifying} have also explored the multiscale GNNs under a wider range of filtering functions (i.e., without tightness property). Based on the above statement, the tightness property of framelet filtering functions ensures a perfect reconstruction of any given signal without information loss, although comparable learning accuracy has also been observed in  \citep{han2023continuous}. In this paper, our purpose is to deploy a well-defined, tightness, and robust quasi-framelet system into graphs. Therefore, we leave the comparison between the deeper mathematical properties between our methods and other multiscale GNNs in future work.

\subsection{Fast Algorithm for the Quasi-Framelet Transform and Its Computational Complexity Analysis}
\begin{figure}[t]
\centering
\includegraphics[width=0.6\textwidth]{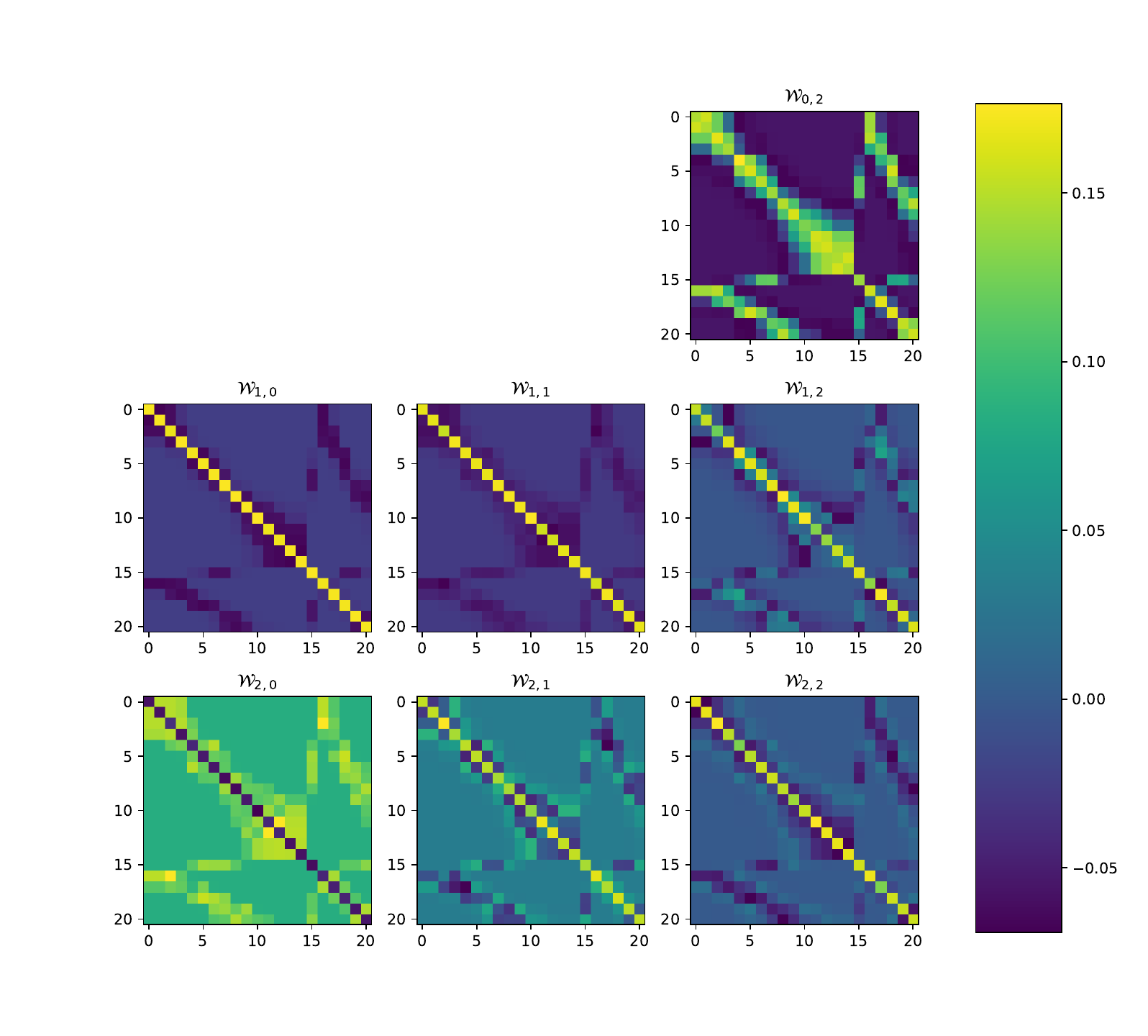}
  \caption{Quasi-Framelet transformation matrices $\mathcal{W}_{k,l}$ at different scales from the left column $l=0$ to the right column $l = L = 2$, for a graph with 21 nodes. The first row corresponds to the lowest frequency for the entropy filtering function $g_0(\xi)$, the middle row $g_1(\xi)$, and the third row $g_2(\xi)$,  based on \eqref{eq:Ta}-\eqref{eq:Tc}.}\label{fig2:quasi-framelet transformation matrices}
\end{figure}

Both the Quasi-Framelet signal decomposition $\widehat{\mathbf x} = \mathcal{W}\mathbf x$ and reconstruction $\mathbf x = \mathcal{W}^T\widehat{\mathbf x}$  are the building block for our Quasi-Framelet convolution. However, in its current form, it is computationally prohibitive for a large graph, as it would cost $O(N^3)$ to get the eigendecomposition of the Laplacian.

Similar to the standard spectral GNNs, we adopt a polynomial approximation to each filtering function $g_j(\xi)$ ($j=0, 1, ..., K$). We approximate $g_j(\xi)$ by Chebyshev polynomials $\mathcal{T}^n_j(\xi)$ of a fixed degree $n$ where the integer $n$ is chosen such that the Chebyshev polynomial approximation is of high precision. In practice, $n=3$ is good enough. For simple notation, in the sequel, we use $\mathcal{T}_j(\xi)$ instead of $\mathcal{T}^n_j(\xi)$. Then the Quasi-Framelet transformation matrices defined in \eqref{eq:Ta} - \eqref{eq:Tc} can be approximated by, {for  $k=1, ..., K, \ell = 1, ..., L$},
\begin{align}
    \mathcal{W}_{k,0} &\approx \mathbf U \mathcal{T}_k\left(\!\frac{\boldsymbol{\Lambda}}{2^{m}}\!\right) \mathbf U^T =  \mathcal{T}_k\left(\!\frac1{2^{m}}\mathbf L\!\right),\label{eq:Tb}\\
    \mathcal{W}_{0,L} &\approx \mathbf U \mathcal{T}_0\left(\!\frac{\boldsymbol{\Lambda}}{2^{L+m}}\!\right) \cdots \mathcal{T}_0\left(\!\frac{\boldsymbol{\Lambda}}{2^{m}}\!\right) \mathbf U^T = \mathcal{T}_0\left(\!\frac1{2^{L+m}}{\mathbf L}\!\right) \cdots \mathcal{T}_0\left(\!\frac{1}{2^{m}}{\mathbf L}\!\right), \label{eq:Ta} \\
    \mathcal{W}_{k,\ell} &\approx \mathbf U \mathcal{T}_k\left(\!\frac{\boldsymbol{\Lambda}}{2^{m+\ell}}\!\right)\mathcal{T}_0\left(\!\frac{\boldsymbol{\Lambda}}{2^{m+\ell-1}}\!\right) \cdots \mathcal{T}_0\left(\!\frac{\boldsymbol{\Lambda}}{2^{m}}\!\right) \mathbf U^T \notag\\
    &= \mathcal{T}_k\left(\!\frac{1}{2^{m+\ell}}\mathbf L\!\right)\mathcal{T}_0\left(\!\frac{1}{2^{m+\ell-1}}\mathbf L\!\right) \cdots \mathcal{T}_0\left(\!\frac{1}{2^{m}}\mathbf L\!\right), \label{eq:Tc}
\end{align}

As an example, we use the heatmap in Fig.~\ref{fig2:quasi-framelet transformation matrices} to show these approximation matrices.

Hence the Quasi-Framelet transformation matrices are approximated by calculating the matrix power of the normalized Laplacian $\widehat{\mathbf L}$. Thus for a graph signal $\mathbf x$, its framelet $\text{QUFS}^L_1(\mathcal{F},; \mathcal{G})$ (see \eqref{Eq:new1}) can be approximately calculated by the following iteration, which also gives the graph signal Quasi-Framelet decomposition \eqref{eq:9}.

\vspace{-0.2cm}
\begin{align}
\begin{aligned}
\text{Start with:}& \text{ for }  k = 1, ..., K, \\
&\Phi_{0,0}:=  \mathcal{T}_0\left(\!\frac{1}{2^{m}}\widehat
{\mathbf L}\!\right)\mathbf x; \;\; \Psi_{k,0}:=  \mathcal{T}_k\left(\!\frac{1}{2^{m}}\widehat{\mathbf L}\!\right)\mathbf x; \\
\text{for }\ell = 1, ..&., L \text{ and } k = 1, ..., K  \text{ do}:  \\
&\Phi_{0,\ell}:=   \mathcal{T}_0\left(\!\frac{1}{2^{\ell+m}}\widehat{\mathbf L}\!\right)\Phi_{0,\ell-1}; \;\; \Psi_{k,\ell}:=   \mathcal{T}_k\left(\!\frac{1}{2^{\ell+m}}\widehat{\mathbf L}\!\right)\Phi_{0,\ell-1}.  
\end{aligned}  \label{Decomposition1}
\end{align}

Similarly for any given Quasi-Framelet signal $\widehat{\mathbf x}$, represented by its framelet $\text{QUFS}^L_1(\mathcal{F},; \mathcal{G})$ i.e., in vector form $\{\Phi_{0,L}\}\cup \{\Psi_{k, \ell}\}^{K, L}_{k=1, \ell=1}$, the reconstruction $\mathbf x = \mathcal{W}^T\widehat{\mathbf x}$  
can be implemented in the following recursive algorithm:
\begin{align}
\begin{aligned}
    \text{for } \ell = L, ..., 1, \text{do:}&\;\;\;\;
     \Phi_{0, \ell-1} :=  \mathcal{T}_0\left(\!\frac{1}{2^{\ell+m}}\widehat{\mathbf L}\!\right)\Phi_{0,\ell} + \sum^K_{k=1} \mathcal{T}_k\left(\!\frac{1}{2^{\ell+m}}\widehat{\mathbf L}\!\right)\Psi_{k,\ell}{;} \\
    \text{Then:}& \;\;\; \mathbf x \approx \mathcal{T}_0\left(\!\frac{1}{2^{m}}\widehat{\mathbf L}\!\right)\Phi_{0,0} + \sum^K_{k=1} \mathcal{T}_k\left(\!\frac{1}{2^{m}}\widehat{\mathbf L}\!\right)\Psi_{k,0}.
\end{aligned}  \label{Reconstruction1}    
\end{align}

Next, we give a computational complexity analysis for the decomposition and reconstruction algorithms defined in \eqref{Decomposition1} and \eqref{Reconstruction1}. Consider a graph signal $\mathbf x$ on a graph with $N$ nodes and $E$ edges. Let $n$ be the order of Chebyshev polynomials $\mathcal{T}_k$ ($k=0, 1, ..., K$). Clearly all the terms $\displaystyle\mathcal{T}_k\left(\!\frac1{2^{\ell+m}}\widehat{\mathbf{L}}\!\right)$ in \eqref{Decomposition1} are in the sum of power terms $\widehat{\mathbf L}^{j}$. For any graph signal $\mathbf x$,  $\widehat{\mathbf L}^{j}\mathbf x$ can be calculated as the iteration $\widehat{\mathbf L}(\cdots (\widehat{\mathbf L}\mathbf x)).$  In practice, we compute $\widehat{\mathbf L}\mathbf u$ using a sparse matrix multiplication with a complexity of $O(E)$. Hence, the total complexity for calculating $\displaystyle\mathcal{T}_k\left(\!\frac1{2^{\ell+m}}\widehat{\mathbf{L}}\!\right)$ is $O(dE)$. Finally, the overall complexity for  $\text{QUFS}^L_1(\mathcal{F},; \mathcal{G})$ in \eqref{Decomposition1} is $O(dL(K+1)E) \approx O(dE)$. Similarly the complexity of \eqref{Reconstruction1} is also approximately $O(dE)$.

\subsection{Quasi-Framelet Convolution}

Similar to classic spectral graph convolutions, we can easily define our Quasi-Framelet convolutions. 
Suppose the input feature matrix $\mathbf X \in \mathbb{R}^{N \times d}$ of the graph $\mathcal{G}$, i.e., the signal is in $d$ channels. Then the graph Quasi-Framelet convolution can be {defined} as:
\begin{align} 
g_{\theta} \star \mathbf X {:=} \sigma(\mathcal{W}^T g_{\theta}\circ (\mathcal{W}\mathbf X\mathbf W)){,} \label{Eq:Convolution}
\end{align} 
where the diagonal $g_{\theta}$ represents a filter to be applied on the spectral coefficients of the signal, $\mathbf W\in \mathbb{R}^{d\times d_1}$ is the feature transformation matrix and $\sigma$ is the layer activation function such as a \texttt{ReLu}. One can interpret the convolution as the following steps. Initially, the node feature $\mathbf X$ is embedded into a lower dimensional space by using matrix $\mathbf W$, and then a learnable spectral filtering process is conducted by first decomposing the embedded graph signal into the domain constructed by our proposed filtering functions. After learning the coefficients ($g_{\theta}$), we do the reconstruction process to project the signals back to the original domain. This whole process admits a convolution operation on the graph node signals. Accordingly, the quasi-framelet convolution can be expressed as follows:
\begin{align}
\label{convolution_explicit}
\mathbf H(r + 1) &= \mathcal W_{0,L}^T {\rm diag}(\mathbf \theta_{0,L}) \mathcal W_{0,L} \mathbf H(r) \mathbf W^r + \sum_{k,\ell} \mathcal W_{k,\ell}^T {\rm diag}(\mathbf \theta_{k,\ell}) \mathcal W_{k,\ell} \mathbf H(r) \mathbf W^r,
\end{align}
which we let $r$ be the number of layers, and $\mathbf H$ be the graph feature representation with $\mathbf H(0) = \mathbf X$. In the real implementation, one can first embed the input feature matrix $\mathbf H$ into its embedding space by leveraging a fully connected feed-forward neural network (i.e., $\mathrm{MLP}$). Then, quasi-framelet decomposition matrices ($\mathcal W$) are prepared through the Chebyshev polynomial of order $n$, and these decomposition matrices are then multiplied by the embedded node features to project the feature into the quasi-framelet domain. Filtering processes are then conducted by assigning each domain a learnable diagonal matrix (i.e., $\mathrm{diag}(\theta)$). Lastly, we project the filtered signal back to the original domain by multiplying it with $\mathcal W^T$.

One can check the cost of conducting quasi-framelet convolution \eqref{Eq:Convolution} can be estimated as the sum of $O(Ndd_1)$ (for $\mathbf X\mathbf W$), $O(N)$ (the filtering $g_{\theta}$), $2 O(d_1E)$ (for both $\mathcal{W}$ and $\mathcal{W}^T$). Hence the overall cost is roughly $O(Ndd_1)+O(d_1E)$.

\subsection{Learnable Diagonal Matrices Control Model's Adaptation Power}\label{sec:adaption_power}
Followed by the feature propagation scheme of quasi-framelet proposed in \eqref{convolution_explicit}, in this section, we briefly discuss the model's adaption power to different types of graphs through the lens of the learnable filtering matrices, i.e., $\mathrm{diag}(\theta)$ in \eqref{convolution_explicit}. We note that the term 
``different types of graphs'' here means the graph with different homophily indices\footnote{The level
of homophily of a graph can be measured by $\mathcal{H(G)} = \mathbb{E}_{i \in \mathcal{V}}[|\{j\}_{j \in \mathcal{N}_{i,y_j= y_i}}|/|\mathcal{N}_i|]$, where $|\{j\}_{j \in \mathcal{N}_{i,y_j= y_i}}|$ denotes the number
of neighbors of $i \in V$ that share the same label as $i$ such that $y_i = y_j$, and $\mathcal{H(G)} \rightarrow 1$ corresponds to strong homophily
while $\mathcal{H(G)} \rightarrow 0$ indicates strong heterophily. }. The labeling distributions within connected nodes are largely different between the graphs with different homophilic indices, and such feature of the graph requires the corresponding GNNs to be able to induce both \textbf{smoothing and sharpening effects} on the node features to fit \citep{di2022graph,han2022generalized}.
For the analysis convenience, we set $K =1$, which suggests there will be only one low-pass filtering and high-pass filtering function in the quasi-framelet system. We also omit the effect of the activation function as the generalization. Our result is straightforward. Accordingly, the propagation rule of our model can be reduced to
\begin{align}
\mathbf H(r + 1) &= \mathcal W_{0,1}^T {\rm diag}(\mathbf \theta_{0,1}) \mathcal W_{0,1} \mathbf H(r) \mathbf W^r +  \mathcal W_{1,1}^T {\rm diag}(\mathbf \theta_{1,1}) \mathcal W_{1,1} \mathbf H(r) \mathbf W^r,
\end{align}
which can be further expressed as:
\begin{align}
\mathrm {vec}(\mathbf H(r + 1)) &= \left(\mathbf W^r\otimes (\theta_{0,1} \mathcal W^T_{0,1} \mathcal W_{0,1} +\theta_{1,1} \mathcal W^T_{1,1} \mathcal W_{1,1}) \right) \mathrm{vec}(\mathbf H(r)) \notag \\
& = \sum_{s,i} \Big(\lambda_s^{\mathbf W^r}  \big(  \theta_{0,1} g^2_0(\lambda_i) + \theta_{1,1} g^2_1(\lambda_i) \big) \Big) c_{s,i}(r) \boldsymbol{\phi}_s^{\mathbf W^r} \otimes \mathbf u_i,
\end{align}
where we further assume that $\mathbf W^r$ is symmetric and let $\{\mathbf u_i, \lambda_i\}^N_{i=1}$ and $\{\boldsymbol{\phi}^{\mathbf W^r}_s, \lambda_s^{\mathbf W^r}\}^{S}_{s=1}$ be the eigenpairs of $\widehat{\mathbf L}$ and $\mathbf W^r$, respectively, and $c_{s,i}(r) = \langle  \mathrm{vec}(\mathbf H(r)), \boldsymbol{\phi}_k^{\mathbf W^r} \otimes \mathbf u_i \rangle$. Since for $\lambda \in [0,\pi]$, the filtering function $g^2_0(\lambda)$ and $g^2_1(\lambda)$ is monotonically decreasing and increasing, respectively, therefore when $\theta_{0,1} \gg\ \theta_{1,1}$\footnote{To further simplify the analysis, we assume both the entries of $\mathrm{diag}(\theta_{0,1})$ and $\mathrm{diag}(\theta_{1,1})$ are constants.} for all $i$, the feature propagation of quasi-framelet is dominated by the result of low-pass filter, resulting in a smoothing effect on the node features, which is more suitable for the graphs with higher homophily indices. On the other hand, when $\theta_{1,1} \gg\ \theta_{0,1}$ for all $i$, the model is dominated by the results from high-pass filtering functions, hence the graph signals tend to be distinguished from each other for fitting the graphs with low homophily indices. Accordingly, our quasi-framelet owns the power of fitting different types of graphs, and there shall be a potential manually-adjusted scheme on $\theta$ for this matter to further reduce the computational cost \citep{shi2023curvature,han2022generalized}. We leave this to one of the future works. 

\section{Experiments}\label{Sec:4}

\subsection{Datasets and Experimental Settings}

\textbf{Datasets.} 
To evaluate the effectiveness of the proposed model QUFG, we conduct experiments on six real-world graph datasets, including three citation networks (Cora, Citeseer, and  
PubMed) two coauthor networks\footnote{\small\url{https://github.com/shchur/gnn-benchmark/tree/master/data/npz}}, including Coauther-CS and Coauther-Physics, as well as one politic blog network (Polblogs)\footnote{\small\url{https://netset.telecom-paris.fr/datasets}}.  
The statistics of the six datasets are shown in Table~\ref{Table1}. {We further highlight that although our proposed QUFG model can be easily extended to other graph learning tasks such as graph pooling and link classification, these tasks are not the main targets that our model wants to improve since, for a given graph, we aim to efficiently filter those unwanted graph signal components by spectral filtering. Hence, our method is more suitable for the node-feature-level classification rather than graph and link level classification and their noised versions. Accordingly, in this paper, we only focus on the node classification tasks, and we leave both empirical and theoretical works on the potential framelet graph pooling and link classification to future works.}

\begin{table}[pth]
\caption{The statistics of graph datasets} \label{Table1}
\centering
\tabcolsep 2.5pt
\begin{tabular}{|c|cccc|}
\hline
Graphs  & Nodes & Edges & Features & \#Classes  \\
\hline
\textbf{Cora} & 2,708 & 5,429 & 1,433 & 7   \\
\textbf{Citeseer} & 3,327 & 4,732 & 3,703 & 6 \\
\textbf{Pubmed} & 19,717 & 44,338 & 500 & 3 \\ \hline
\textbf{Coauthor-CS} & 18,333 & 163,788 & 6,805 & 15 \\
\textbf{Coauthor-Physics} & 34,493 & 495,924 & 8,415 & 3 \\
\textbf{Polblogs} & 1,490 & 19,025 & index & 2\\
\hline
\end{tabular}
\end{table}

\textbf{Baselines.} For assessing the robustness towards noises on node features, our QUFG model is compared with several competitive baselines. 1) Spectral GNNs: Spectral CNN \citep{BrunaZarembaSzlamLeCun2014}, Chebyshev \citep{defferrard2016convolutional}, GWNN \citep{xu2019graph}, UFG \citep{zheng2021framelets}; 2) Spatial GNNs: GCN \citep{KipfWelling2017}; For robustness evaluation with the adversarial attacks on graph structures, we compare the QUFG model with Spectral CNN \citep{BrunaZarembaSzlamLeCun2014}, Elastic GNNs \citep{liu2021elastic}, UFG \citep{zheng2021framelets} and GCN \citep{KipfWelling2017}, and SOTA baselines H2GCN \citep{ZhuYanZhaoHeimannAkogluKoutra2020} and GPRGNN \citep{ChienPengLiMilenkovic2021}. We also include the framelet p-Laplacian model (pL-UFG) proposed in \citep{ShaoShiHanVasnevGuoGao2022} since the model can be considered as the elastic GNN in the quasi-framelet domain.

\textbf{Setup.} 
For each experiment, we report the results of the QUFG model using entropy filtering functions since their performance is better than that of QUFG with sigmoid functions. The results are averaged over 10 runs. Each dataset is split into training, validation, and test sets by $20\%$, $20\%$, $60\%$. Except for some default hyper-parameters, such as learning rates and the number of hidden units, all models are tuned to the validation set. For Cora, Polblog, Coauthor graphs, and Citeseer, the number of hidden units is 32, the learning rate is 0.01, and the number of iterations is 200 with early stopping on the validation set. For PubMed, experiments are conducted with 64 hidden units, a learning rate of 0.005, and 250 iterations. The code used in our experiments is publicly
available at \url{https://github.com/mengxang/quasi-framelet}. All experiments are conducted in Python 3.11 with Pytorch Geometric on one NVIDIA\textsuperscript{\textregistered}RTX 3060 GPU with 3584 CUDA cores and 12GB GDDR6 memory size.  

\setcounter{table}{1}
\begin{table}[t]
\caption{{Node classification accuracy (\%) under binary noise settings. ART (ms): Average Running Time per epoch in ms. The best and second-best learning accuracy are in bold and underlined. We also note that for the p-Laplacian framelet model, we fixed $p =2.5$ so that the model is more suitable for citation networks \citep{ShaoShiHanVasnevGuoGao2022}. We also highlight that our results on pL-UFG is different compared to \citep{ShaoShiHanVasnevGuoGao2022} due to the difference in data split.}}
\label{table:table 6a}
\setlength{\tabcolsep}{7.5pt}
\renewcommand{\arraystretch}{1.5}
\centering
\begin{adjustbox}{width= 12cm,center}
\begin{tabular}{|c|c |c c c c c c|}
\hline
Datasets & Noise Level (\%) & Clean & 5  & 15 & 25 & 50  &ART (ms)\\
\hline
\multirow{6}{*}{\textbf{Cora}} & Spectral CNN 
& 73.3 & 26.0$\pm$5.1  & 19.5$\pm$4.1  & 15.2$\pm$4.7 & 19.1$\pm$6.9   & 12.3\\
& ChebyNet 
& 81.2 & 30.6$\pm$4.4 & 30.9$\pm$2.6 & 21.5$\pm$7.6 & 18.1$\pm$4.1 &  16.2\\
& GCN 
& 81.5 & 76.7$\pm$0.8  & 67.8$\pm$1.0 & 61.0$\pm$0.9 & 54.5$\pm$2.4 &   9.25 \\

& GWNN 
& 82.8 & 43.5$\pm$1.6  & 30.2$\pm$1.0 & 23.6$\pm$1.3 & 20.1$\pm$3.2 &15.2\\

& GPRGNN 
& 81.6\(\pm\)0.7    &65.9\(\pm\)7.2  &24.1\(\pm\)5.3  &22.4\(\pm\)7.0  &21.5\(\pm\)6.1 &14.2 \\

& H2GCN 
& 82.9\(\pm\)0.5    &70.3\(\pm\)1.3  &34.6\(\pm\)5.6  &26.1\(\pm\)7.5  &24.1\(\pm\)8.2  &26.5\\
& UFG 
& 83.6$\pm$0.6 & 75.6$\pm$0.8  & \underline{70.6$\pm$0.7} & {66.9$\pm$1.7} & {60.6$\pm$2.7} & 29.4\\

& pL-UFG$^{2.5}$ &\underline{83.9$\pm$1.7}  &\underline{77.8\(\pm\)2.4}  & \underline{68.8\(\pm\)0.3}  &\textbf{68.4\(\pm\)0.9}  &\underline{66.5\(\pm\)1.6}  &40.8  \\

& QUFG & \textbf{83.9$\pm$0.8} & \textbf{79.1$\pm$0.9} & \textbf{71.6$\pm$1.2} & \underline{68.2$\pm$2.1}& \textbf{62.7$\pm$2.1} & 32.7\\
\hline

\multirow{6}{*}{\textbf{Citeseer}} & Spectral CNN 
& 58.9 & 19.0$\pm$1.4 & 17.6$\pm$3.9 & 18.0$\pm$2.6 & 20.3$\pm$4.5  &9.3\\
& ChebyNet 
& 69.8 & 19.4$\pm$1.8 & 17.3$\pm$4.1  & 18.2$\pm$2.7 & 17.2$\pm$3.3 & 31.4\\
& GCN 
& 70.3 & \underline{62.6$\pm$0.8} & {47.3$\pm$1.2}  & {39.4$\pm$1.1} & {31.9$\pm$2.5} & 10.1\\
& GWNN 
& 71.7 & 49.4$\pm$1.4 & 21.1$\pm$0.3  & 21.3$\pm$0.5  & 15.8$\pm$0.9 &16.5\\

& GPRGNN 
 & 70.9\(\pm\)1.5 & 19.3\(\pm\)2.1 & 18.6\(\pm\)1.7 & 18.4\(\pm\)2.0   & 18.1\(\pm\)2.3    &  13.6\\

&  H2GCN  & \underline{ 73.1\(\pm\)0.4    } & 47.8\(\pm\)2.2   & 24.2\(\pm\)1.3   &19.5\(\pm\)1.5   & 19.2\(\pm\)0.9   & 26.3\\
& UFG 
& 72.7$\pm$0.6 & {61.6$\pm$1.3} & 40.8$\pm$7.0  & 26.0$\pm$3.2 & 23.5$\pm$2.7  & 49.1 \\

& pL-UFG$^{2.5}$ &72.2\(\pm\)1.4  &\textbf{64.8\(\pm\)3.1}  & \textbf{50.3\(\pm\)3.3}  &\underline{42.8\(\pm\)2.8}  &\underline{39.7\(\pm\)1.4} & 50.1 \\

& QUFG  & \textbf{73.4$\pm$0.6} & 60.9$\pm$1.8 & \underline{47.4$\pm$1.8} & \textbf{44.9$\pm$2.3} & \textbf{42.4$\pm$1.6}   & 45.6\\

\hline

\multirow{6}{*}{\textbf{Coauthor-CS}} & Spectral CNN 
& 54.3$\pm$0.5 & 12.3$\pm$4.0 & 8.5$\pm$4.4 & 6.1$\pm$2.7 & 6.1$\pm$2.3  &33.6\\
& ChebyNet 
& 61.4$\pm$0.9 & 14.5$\pm$4.5 & 10.1$\pm$2.3 & 8.3$\pm$1.3& 5.0$\pm$1.5  &190.6\\
& GCN 
& 91.3$\pm$0.2 & 89.3$\pm$0.6 & \underline{84.5$\pm$1.1}  & \underline{64.9$\pm$0.7} & 8.3$\pm$3.0   & 32.4\\
& GWNN
& 89.5$\pm$2.3 & 80.8$\pm$0.2 & 71.5$\pm$1.3 & 46.3 $\pm$1.1 & 9.2$\pm$5.4 & 49.1\\
&  GPRGNN  & 92.4\(\pm\)0.2  & 88.1\(\pm\)1.0  & 32.6\(\pm\)12.4 & 20.4\(\pm\)6.2 & 10.7\(\pm\)2.9 &  30.2\\
& H2GCN  & \underline{ 93.0\(\pm\)0.1    } & 71.3\(\pm\)0.7 & 12.7\(\pm\)5.6   & 11.9\(\pm\)6.4   & 10.2\(\pm\)4.8  &  147.2\\
& UFG 
& 91.9$\pm$0.3 & {90.2$\pm$0.5} & 80.8$\pm$1.5  & 60.8$\pm$0.4 & \underline{9.4$\pm$1.5}  &411.7\\

& pL-UFG$^{2.5}$ &92.4\(\pm\)0.1  &\textbf{91.1\(\pm\)0.8}  &81.4\(\pm\)2.3  &66.0\(\pm\)5.3  &\textbf{14.8\(\pm\)1.1}  & 474 \\
& QUFG  & \textbf{93.7$\pm$0.6} & \underline{90.2$\pm$0.3} & \textbf{85.5$\pm$0.6} & \textbf{66.0$\pm$1.1} & \underline{10.3$\pm$2.8} & 329.4 \\

\hline

\multirow{6}{*}{\textbf{Coauthor-Physics}} & Spectral CNN 
& 51.2$\pm$2.0 & 16.5$\pm$0.3 & 11.2$\pm$1.8 & 7.9$\pm$0.7 &  8.1$\pm$4.5  & 84.7\\
& ChebyNet 
& 65.8$\pm$3.5 & 16.8$\pm$2.3 & 12.1$\pm$2.9 & 10.9$\pm$2.4 & 9.0$\pm$3.6 & 791.4\\
& GCN 
& 94.1$\pm$0.5 & 92.7$\pm$0.4 & 84.5$\pm$1.1  & \textbf{66.4$\pm$7.0} & 30.3$\pm$18.5  &  92.7\\
& GWNN 
& 79.2$\pm$1.3 & 76.9$\pm$0.3 & 68.3$\pm$2.1 & 50.0$\pm$4.5 & 19.2$\pm$3.4  & 201.3\\
& GPRGNN  & 93.5\(\pm\)1.8   & 81.4\(\pm\)2.6  & 39.1\(\pm\)4.9  & 28.8\(\pm\)6.4   & 20.1\(\pm\)2.5  &  129.4\\
& H2GCN  & 94.2\(\pm\)1.5 & 60.5\(\pm\)2.7 & 38.4\(\pm\)6.5   & 22.1\(\pm\)3.8 & 18.2\(\pm\)2.5 &  511.2\\
& UFG
& \underline{94.3$\pm$0.4} & \textbf{93.0$\pm$0.6} & {84.7$\pm$1.7}  & 43.6$\pm$12.5 & {40.4$\pm$0.5} & 1490.5 \\

& pL-UFG$^{2.5}$ &91.7\(\pm\)0.1  &88.9\(\pm\)1.5  &\underline{86.1\(\pm\)2.4}  &{51.4\(\pm\)4.5}  &\textbf{49.8\(\pm\)3.6}  &1630 \\

& QUFG & \textbf{94.4$\pm$0.4} & \underline{92.8$\pm$0.4} & \textbf{87.0$\pm$0.4} & \underline{56.0$\pm$8.1} & \underline{43.8$\pm$2.3} &  1380.4\\
\hline
\end{tabular}
\end{adjustbox}
\end{table}

\subsection{{Robustness w.r.t. Noisy Node Features}}

Our QUFG model is designed to reduce perturbations on node features. In order to compare its superior ability with other spectral GNNs and spatial GCN baselines, the node classification task is conducted under the noisy node feature setting and accuracy results are reported. 

In the experiments, the denoising capability of the QUFG for node classification tasks is measured by adding noises to citation graphs. First, the denoising experiments are performed by including binary noises on node features on Cora, CiteSeer, Coauthor-Physics, and Coauthor-CS where node features are bag-of-words representations of documents. The binary noising mechanism involves creating a random matrix of values between 0 and 1, converting it to a binary mask based on a specified noising ratio, and then using this mask to modify the original node feature $\mathbf X$. The modified matrix is then processed to generate the final noised feature matrix, where values are either 1.0 or 0.0 based on the modifications applied by the mask. Furthermore, the PubMed dataset is used to test the model's capabilities in dealing with Gaussian noises since node features in Pubmed are described by a TF/IDF weighted word vector where each entry is a float value between 0 and 1. Similarly, a Gaussain noise matrix is initially generated according to the noise ratio and then added to the feature matrix of PubMed. The whole noising mechanism is with similar characterises compared to the previous works \citep{zhang2020gnnguard,zhang2021backdoor}.

Table \ref{table:table 6a} reports the node classification results under the binary denoising settings: binary noises with different binary noise levels: 5, 15, 25, 50, which represents what percentage of nodes are poisoned by binary noises. In addition, we also report the average running time per epoch in ms of all included models.
Meanwhile, Table~\ref{table:table Gaussian} reports the accuracy performances of various GNN models conducting node classification tasks on the Pubmed dataset under the Gaussian denoising settings. Gaussian noises are set with different levels represented by various standard deviations: 0.05, 0.1, and 0.2.  The results show that the QUFG model shows superior performance over the baseline models in nearly all the cases, which proves the QUFG model's denoising strength. However, one can check that with the same percentage of the noise level, the learning accuracy of GNNs under the data with Gaussian noise is much less than those under the binary noise, suggesting that the Gaussian noise attack has higher destructive power than the binary noise to the node features. However, when the Gaussian noise level increases, the decreasing speed of the learning accuracy slows down. We highlight that a potential explanation for this is when a higher frequency (larger standard derivation) of Gaussian noise is deployed to the graph; this will bring more effect on those nodes with higher distinctive features than others. However, given the input graph (Pubmed) is a citation graph in which connected nodes are more likely to share the same labels, the role of QUFG and other GNNs is mainly to homogenize the features between connected nodes. Therefore, those highly noised nodes are  (1) essentially with different labels compared to others, and (2) rare in the citation graphs, and this explains why there is a drop in the decreasing speed. However, fully quantifying the reason for this observation might require some additional assumptions on the feature distributions or directly simulated features with distribution assumptions; we leave this to future work. 

Moreover, although both UFG and QUFG have relatively high time consumption, their average running times are still less than 2 seconds per epoch. Lastly, one can find that serving as the elastic GNN on the quasi-framelet domain. The pL-UFG also shows good performance along with all included datasets with relatively higher computational time. We note that this is due to the fact that the so-called implicit layer is incorporated as additional computation in pL-UFG. However, our QUFG is with similar or even higher accuracy than pL-UFG without the additional computation, suggesting the effectiveness of our proposed model.

\begin{table}[H]
\caption{Node classification accuracy (\%) on Pubmed under Gaussian noise settings (Best results highlighted by bold).}
\label{table:table Gaussian}
\centering 
\begin{adjustbox}{width= 12cm,center}
\tabcolsep 2.5pt 
\begin{tabular}{|c|cccc||c|cccc|}
\hline
 Noise (\%) & Clean & 5 & 10 & 20 & & Clean & 5 & 10 & 20\\
\hline
Spectral 
& 73.9 & 37.4$\pm$3.5 & 36.7$\pm$3.2  & 36.5$\pm$3.1 & GWNN & 79.1 & 20.9$\pm$0.4  & 19.6$\pm$0.3 & 19.6$\pm$0.2 \\

ChebyNet 
& 74.4 & 37.6$\pm$3.3 & 35.7$\pm$2.8  & 36.8$\pm$3.0  &UFG 
& 79.9  & 43.3$\pm$2.1 & 42.9$\pm$1.5 & 41.3$\pm$1.9 \\

H2GCN  & 78.8   & 32.4\(\pm\)2.6   & 32.2\(\pm\)3.6    &31.3\(\pm\)4.4  & 
GPRGNN  & 79.7   & 37.2\(\pm\)3.1   & 36.8\(\pm\)2.9    & 38.1\(\pm\)4.5\\

GCN  
& 79.0 & \textbf{44.1$\pm$1.4}  & 42.6$\pm$1.4 & 41.1$\pm$2.7 & QUFG & \textbf{80.6}  
& 43.6$\pm$2.9 & \textbf{43.8$\pm$1.7} & \textbf{41.5$\pm$2.2} \\
\hline
\end{tabular}
\end{adjustbox}
\end{table}

\begin{figure}[t]
    \centering
    \includegraphics[width=1.05\linewidth]{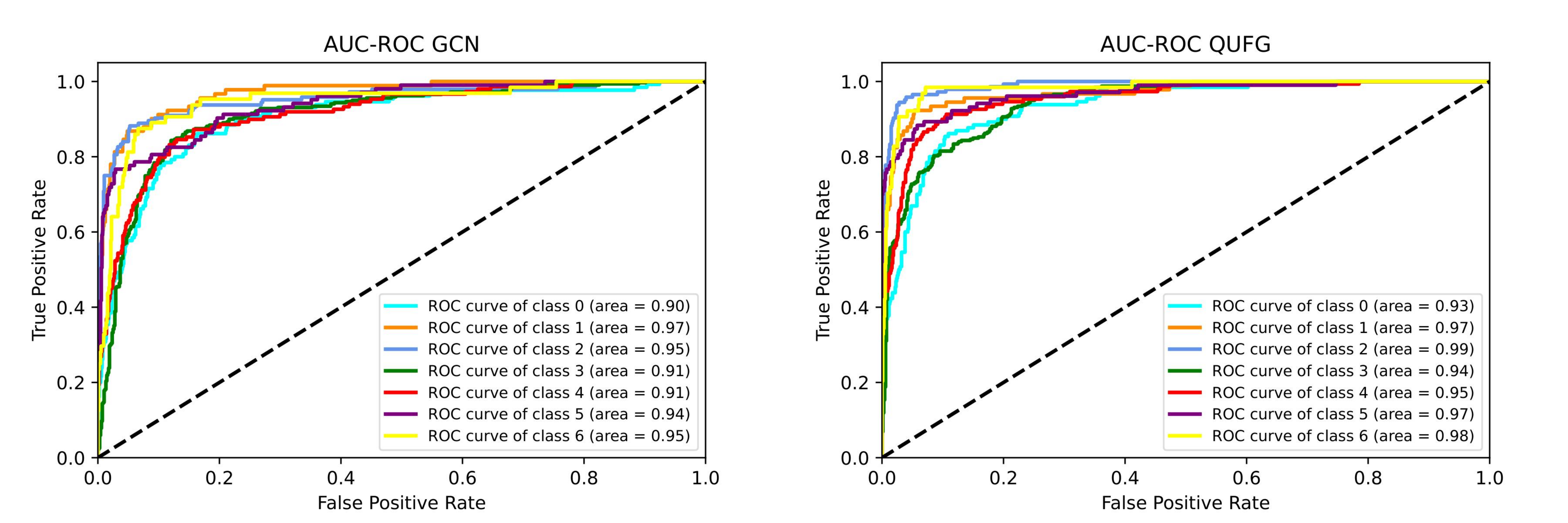}
    \caption{AUC-ROC between GCN and QUFG's performance on Cora (7 node classes). One can check that with higher average accuracy, the AUCs of QUFG (right) are also bigger than those in GCN (left).}
    \label{auc-roc}
\end{figure}
 Furthermore, given in this node classification task, both our model and other baselines can be served as classifiers. Accordingly, we include Figure.~\ref{auc-roc} as the area under the ROC curve (AUC-ROC) for the learning results between GCN and QUFG on Cora. 

Overall, the results show that all spectral graph convolutions are affected by noises. Nevertheless, our QUFG model achieves higher classification accuracy over baseline models, especially under the denoising settings. This proves the QUFG model's robustness strength w.r.t. noisy node features.

\subsection{Robustness under Adversarial Attacks}
Adversarial attacks \citep{SunDouYangZhangWangYuHeLi2023} on graph data involve manipulating the input graph to deceive a machine learning model, leading to misclassification or erroneous predictions. In the context of graph data, there are four major types of adversarial attacks, such as \textit{graph perturbation}, \textit{feature manipulation}, \textit{structure-based attacks} and \textit{structural perturbation}. These methods collectively contribute to the development of effective adversarial attacks on graph data, fostering research in robust machine learning models and defense mechanisms for graph-based tasks.

Under the setting of untargeted adversarial attacks on graph structures, we test and compare the performance of the QUFG model with that of other GCN models like Elastic GNNs \citep{liu2021elastic} to evaluate its robustness. We conduct experiments on perturbated datasets\footnote{\small\url{https://github.com/ChandlerBang/Pro-GNN}} including Cora, Citeseer and PubMed provided by \citet{jin2020graph}. These perturbated datasets are created by implementing the MetaAttack \citep{zugner2018adversarial} on clean graphs. QUFG models and baseline models are all trained on the attacked graphs respectively and are compared based on their node classification accuracy against the non-targeted adversarial attacks. Table~\ref{table:table adver} reports the model's accuracy performance. 

\begin{table}[H]
\caption{Node classification accuracy (\%) under adversarial graph attacks. The best results are highlighted by bold.}
\label{table:table adver}
\centering
\begin{adjustbox}{width= 10.5cm,center}
\begin{tabular}{|c|c |c c c c|}
\hline
Datasets & Noise Level (\%) & 5 & 10 & 15 & 20 \\
\hline
\multirow{6}{*}{Cora} & Spectral CNN  
& 49.2$\pm$0.9  & 49.5$\pm$1.0  & 51.6$\pm$1.0 & 53.3$\pm$0.8\\
& GCN  
& 76.7$\pm$0.8  & 67.8$\pm$1.0 & 61.0$\pm$0.9 & 11.9$\pm$4.5\\
& Elastic GNN  
& 82.2$\pm$0.9  & 78.8$\pm$1.7 & \textbf{77.2$\pm$1.6} & 70.5$\pm$1.3 \\
& UFG  
& 81.4$\pm$0.5  & 78.8$\pm$0.8 & 74.6$\pm$1.1 & 66.5$\pm$1.9 \\
& QUFG & \textbf{82.8$\pm$0.5} & \textbf{79.3$\pm$0.7} & 74.6$\pm$0.6 & \textbf{71.0$\pm$1.0} \\
\hline
\multirow{6}{*}{Citeseer} & Spectral CNN  
& 50.5$\pm$1.7 & 49.7$\pm$1.0 & 53.3$\pm$1.2 & 51.3$\pm$0.7  \\
& GCN  
& 70.9$\pm$0.6 & 67.6$\pm$0.9 & 64.5$\pm$1.1 & 62.0$\pm$3.5 \\
& Elastic GNN  
& 73.3$\pm$0.6  & 72.6$\pm$0.4 & 71.9$\pm$0.7 & 64.9$\pm$1.0 \\
& UFG  
& 74.4$\pm$0.5 & 72.2$\pm$0.8  & 70.5 $\pm$ 0.9 & 63.9 $\pm$ 1.4  \\
& QUFG  & \textbf{75.1$\pm$0.4} & \textbf{73.8$\pm$0.3} & \textbf{72.0$\pm$0.4} & \textbf{67.0$\pm$0.4} \\
\hline
\multirow{6}{*}{Polblogs} & Spectral CNN  
& 47.6$\pm$1.4 & 44.2$\pm$0.7 & 50.2$\pm$2.3  & 50.0$\pm$0.6  \\
& GCN  
& 73.1$\pm$0.8 & 70.7$\pm$1.1 & 65.0$\pm$1.9 & 51.3$\pm$1.2 \\
& Elastic GNN  
& 83.0$\pm$0.3  & 81.6$\pm$0.3 & 78.7$\pm$0.5 & 77.5$\pm$0.2 \\
& UFG  
&  90.0$\pm$0.7 & 87.8$\pm$ 0.4  & 86.7$\pm$0.7 & 86.1$\pm$0.6  \\
& QUFG  & \textbf{90.7$\pm$0.3} & \textbf{88.6$\pm$0.3} & \textbf{87.6$\pm$0.5} & \textbf{87.0$\pm$0.3} \\
\hline
\end{tabular}
\end{adjustbox}
\end{table}
Although QUFG seems to work directly on graph signals, its filtering capability and convolution operator also enables the robustness of our model under adversarial attacks. In the process, some attacked edges may be automatically seen as insignificant since the features of their connected nodes are recognized by QUFG as noise or unimportant signals.  It is worth pointing out that QUFG is not designed explicitly for defending against adversarial attacks, instead its robust performance naturally comes from adaptively selecting appropriate frequency components when training against the node targets/labels for node classification tasks. This contrasts with recent works such as RGCN (Zhu et al 2023) and GNNGuard (Zhang and Zitnik 2020) that focus on handling different types (i.e., target and untargeted) of adversarial attacks specifically.

\subsection{The Effect of Cut-off Strategies} \label{Sec:4.4}

One of the main reasons that the QUFG shows strengths in its robustness to noise and adversarial attacks is its high flexibility in cutting off unwanted spectral information and preserving uncommon true graph signals. In signal processing, many models act as low-pass filters, treat high-frequency signals as noises, and thus, ignore the true values hidden within high-frequency patterns. To further demonstrate the denoising ability of the QUFG, especially in relevant graph high-frequency components, the following experiments have been conducted on QUFG against the state-of-the-art UFG framelets.

To simulate the high-frequency noises, we pick up the eigenvectors of the normalized Laplacian $\mathbf L$ corresponding to the larger eigenvalues. Assume that $\mathbf L = \mathbf U\boldsymbol{\Lambda}\mathbf U^\top$ where  eigenvalues $\boldsymbol{\Lambda}$ are in a decreasing order, then the noises to be injected into the node signal are given
$
\mathbf n = \mathbf U[:,1:F]\mathbf w
$
where $F$ is the number of high-frequency components with two choices ($F=100$ or $500$) and $\mathbf w$ in shape $[F,1]$ is the noise levels, randomly chosen from normal distribution with standard deviation levels $nl=5.0$, 10.0 and 20.0, respectively. We will use the benchmark parameters as in \citep{zheng2021framelets}, by setting two GCN layers, hidden feature $nhid = 16$, Chebyshev order $n=3$ (The results for $n=2$ demonstrate a similar pattern, see Appendix C.), Soft threshold for denoising, weight decay 0.01 and learning rate 0.005 with epoch number 200.

As a case study on Cora, we consider two high-frequency components cut-off strategies: (a) \texttt{PartialCutoff}: In \eqref{Eq:Convolution} setting $\mathcal{W}_{K,0}=0$ to remove the highest frequency; and (b) \texttt{FullCutoff}: In \eqref{Eq:Convolution} setting $\mathcal{W}_{K,l}=0$ ($\ell=0, 1, ..., L$) to remove all high frequencies.

\begin{table}[H]
\centering
\caption{Denoising results: node classification accuracy (\%) on Cora with varying noise levels (20 in 1st row, 10 in 2nd row and 5 in 3rd row.}
\label{Table4} 
\begin{adjustbox}{width= 12cm,center}
\begin{tabular}{|c|c|c|c|c|c|c|c|}\hline 
   \multicolumn{2}{|c|}{PartialCutoff(F=100)} &\multicolumn{2}{c|}{FullCutoff(F=100)}  &  \multicolumn{2}{c|}{PartialCutoff(F=500)} &\multicolumn{2}{c|}{FullCutoff(F=500)}\\ \cline{1-8}
 QUFG & UFG & QUFG & UFG  & QUFG & UFG & QUFG & UFG\\ \hline
   82.1{\scriptsize $\pm$0.3} &52.7{\scriptsize $\pm$1.9} &82.1{\scriptsize $\pm$0.3} &71.1{\scriptsize $\pm$0.7}&80.4{\scriptsize $\pm$0.8}&26.8{\scriptsize $\pm$0.7} &80.5{\scriptsize $\pm$0.5}&29.1{\scriptsize $\pm$1.5}\\
\hline    
 81.9{\scriptsize $\pm$0.3}&66.5{\scriptsize $\pm$0.7}&82.0{\scriptsize $\pm$0.3}&76.1{\scriptsize $\pm$0.6}&80.7{\scriptsize $\pm$0.6}&28.5{\scriptsize $\pm$1.1}&81.0$\pm$0.6&32.8{\scriptsize $\pm$0.9}\\
\hline
82.2{\scriptsize $\pm$ 0.2}&73.3{\scriptsize $\pm$0.6}&81.8{\scriptsize $\pm$0.9}&78.4{\scriptsize $\pm$0.6} &81.5{\scriptsize $\pm$0.8}&32.1{\scriptsize $\pm$0.6}&81.6{\scriptsize $\pm$0.8}&43.5{\scriptsize $\pm$1.0} \\
\hline
\end{tabular}
\end{adjustbox}
\label{table:table 4}
\end{table}

Experimental results under both partial cutoff and full cutoff settings are reported in Table \ref{table:table 4}. In both \texttt{PartialCutoff} and \texttt{FullCutoff} settings, with a group of entropy modulation functions, the QUFG model demonstrates its accuracy performance of very similar patterns across all cases, while the accuracy performance of the UFG model looks significantly different for two cut-off strategies. This is because the Entropy framelet has a strong capability to fully cut off the high frequencies no matter at what noise scales, while as the UFG framelet is designed from the spatial domain, there is no control over what frequency components should be regulated. Furthermore, although UFG can denoise relevant high-frequency components (in the case of $F=100$), it almost fails in identifying mild high-frequency components (in the case of $F=500$).

\begin{figure}[t]
    \centering
    \includegraphics[width=0.85\linewidth]{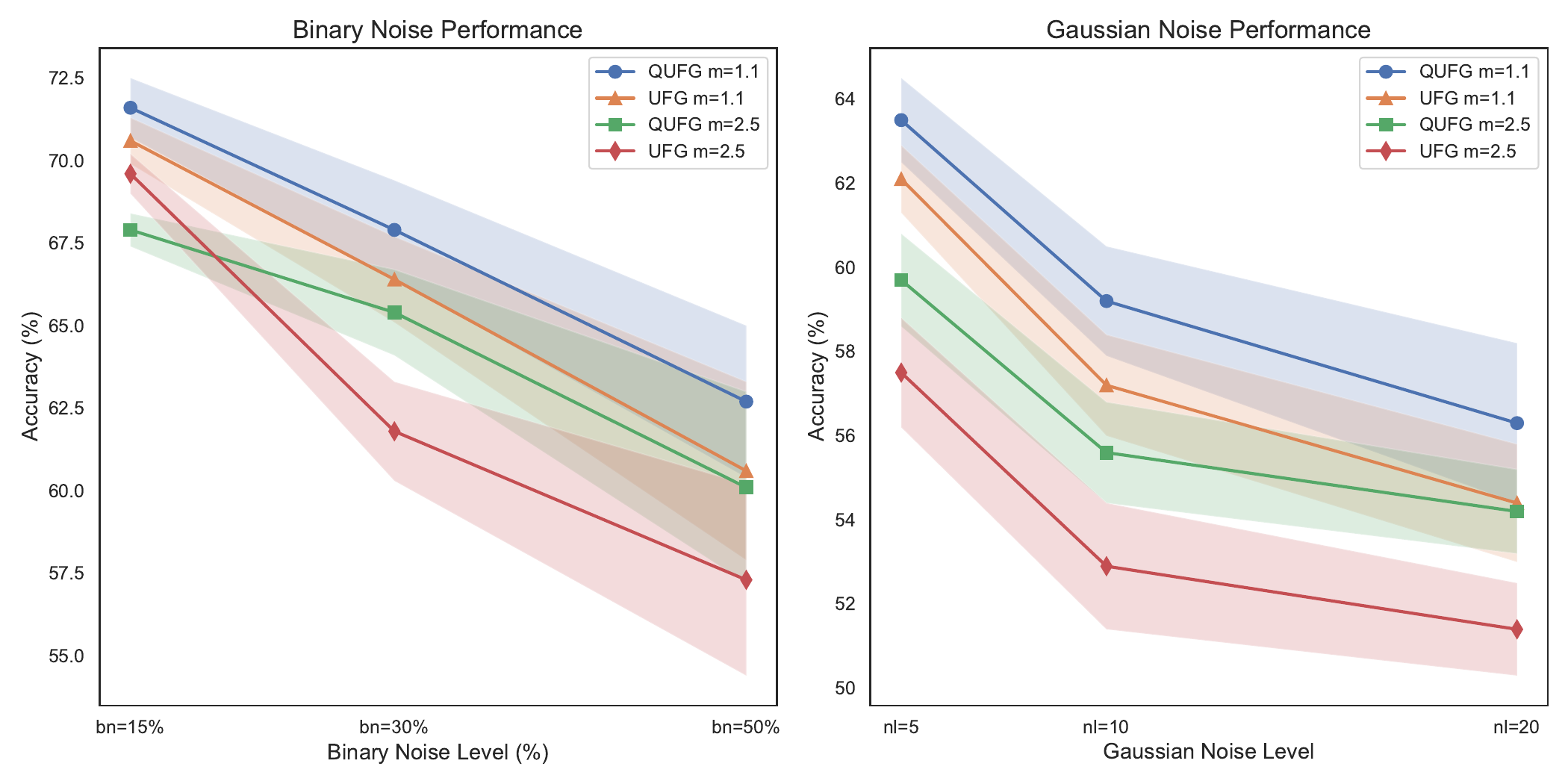}
    \caption{Denoising performance of the model (UFG and QUFG) via different coarse scale levels.}
    \label{fig:denoising_sensiivity}
\end{figure}

\subsection{Experiments on Parameter Sensitivity}
\subsubsection{Sensitivity on the Coarse Scale Level} In this test, we conduct experiments on the parameter sensitivity. We consider several hyperparameters. The first one is the coarsest scale level $m$, which is the smallest quantity that satisfies $2^{-m}\lambda_N \leq \pi$. Another hyperparameter that we consider is the quantity of $\alpha$ which controls modulation power at both the lowest and highest frequencies used in the entropy filtering functions. Given the scale level $m$ is used for both UFG and QUFG to ensure the range of the eigenvalues is within the domain of the Chebyshev polynomial, we compare our QUFG to the UFG model in terms of different levels of noise. Specifically, 
we test the effect of denoising with respect to two types of noises: (1) spreading $bn =15\%$, $30\%$ and $50\%$ poison with random binary noise; (2) injecting Gaussian white noise with standard deviation at levels $nl=5.0$, 10.0 and 20.0 into all the features.  
The filters in the UFG model and QUFG model are conducted and compared by scale $m = 2.5$ and scale $m=1.1$ respectively.  

From the Fig.~\ref{fig:denoising_sensiivity} one can check that QUFG always outperforms UFG along with all noise levels and scales, suggesting the enhancing power of the QUFG is essentially from the carefully designed filtering functions rather than the parameter tuning. In addition, we also observed that the results of $m = 1.1$ is always better than the results $m =2.5$, this suggests a stronger scaling (i.e., $m>2$) of the Laplacian eigenvalues is unnecessary for regulating the eigenvalues of the graph to fit the domain of Chebyshev polynomial in terms of the denoising tasks. However, to fully explore the effect of $m$ on denoising tasks quantitatively might require one to explicitly illustrate the changes of the dynamic of QUFG in terms of $m$ \citep{han2022generalized}; we leave this to future work. Lastly, both QUFG and UFG show increasing standard deviations in high noise levels, suggesting a potential limitation on model's stability via highly noised data, and to resolve the problem, one may need to manually control models' energy dynamic \citep{han2023continuous} and this may out of the scope of our work, we omit it here.

\subsubsection{Sensitivity on the Modulation power}\label{sensitivity_on_alpha}
Followed by the sensitivity test on $m$, we test the models' sensitivity on $\alpha$ in the entropy filtering functions. Recall that in our proposed entropy filtering functions, we let $0 \leq \alpha \leq 1$, and we fixed $\alpha = 0.75$ in all of our experiments. In general, $\alpha$ modulates the amplitude and shape of the parabolic function $g_1({\xi})$  directly impacting the filter's strength and distribution of the signal's energy. Higher values of $\alpha$ result in greater modulation power, allowing for a broader distribution of the filtered response, while lower values concentrate the filter’s effect. This modulation capability is essential for adapting the filter’s response to different signal characteristics, thereby enhancing the filtering process. The Figure below shows the effect of $\alpha$ on model's performance across various datasets. 
 
\begin{figure}[htp]
    \centering
    \includegraphics[width=0.5\textwidth]{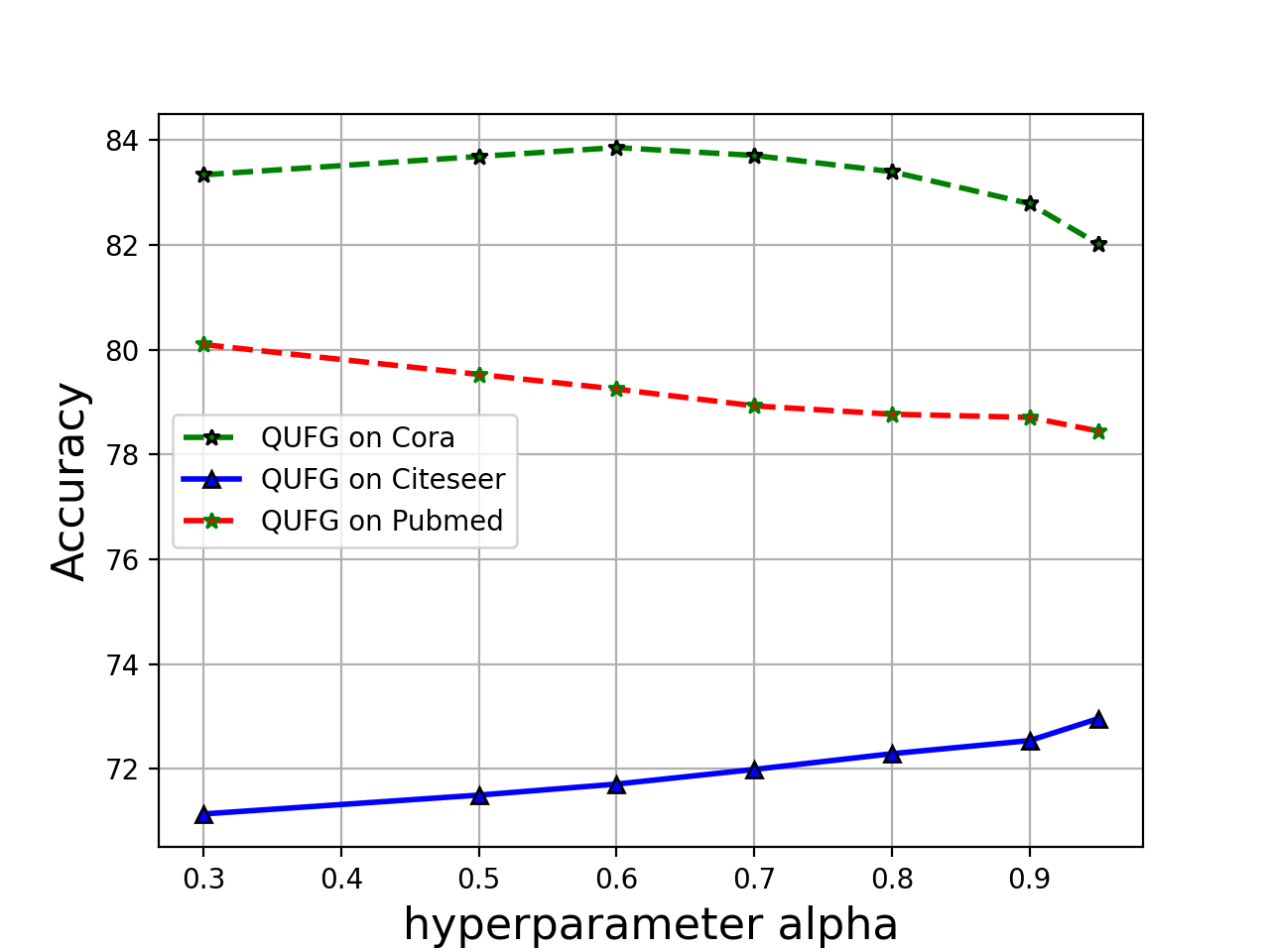}
    \caption{For datasets Cora, Citeseer, and Pubmed, the accuracy of the QUFG GCN model with respect to different $\alpha$ values in entropy modulation function ranging from 0.3 to 0.95.}\label{hyperparameter_alpha}
\end{figure}

From Fig.~\ref{hyperparameter_alpha}, one can check that for both Cora and PubMed datasets, the accuracy of QUFG dropped via the increase of $\alpha$, whereas in Citeseer, the accuracy increased. We note that this observation could be due to the notion of so-called graph homo/heterophily, which we introduced in Section \ref{sec:adaption_power}. The so-called homophily index has been widely applied in recent studies to measure the degree of homophily and heterophily. It is known that the homophily indices for Cora and Pubmed are 0.8 and 0.79, respectively, whereas for Citeseer, it is only 0.71. This suggests that compared to Cora and Pubmed, Citeseer requires GNNs to learn a broader distribution of the filtered responses, and this directly verifies our observation where a higher $\alpha$ gives QUFG better results in Citeseer and lower $\alpha$ provides the best results for Cora and Pubmed.

\section{Concluding Remarks and Further Research Directions}\label{Sec:5}
This paper extends the classic multiscale graph framelets by introducing a group of newly designed filtering functions. These functions are proven to inherit many classic properties of graph framelets, such as perfect reconstruction and low computational complexity. Furthermore, we have empirically shown that the QUFG equipped with the proposed filtering functions is more robust to different types of noises than the graph framelet, suggesting a stronger denoising and adaptive power. Our investigation into the robustness of the QUFG model to noise positions our work alongside other studies on framelet properties, including robustness to over-smoothing (OSM) issues and strong adaptation to heterophilic graphs. Most of the research in this field resolved these problems (OSM and heterophily adaptation) through spatial methods, such as adjacency matrix reweighting, rewiring, and adding source terms, yet rare of them considered mitigating these problems via spectral domain. Thus the potential future study direction is to explore the effect of spectral filtering function on these computational problems in GNNs.

\clearpage 
\bibliographystyle{spbasic}
\bibliography{reference}

\end{document}